\pgfplotsset{compat=1.18}
\def\eqref#1{equation~\ref{#1}}
\def\1{\bm{1}}
\def\rvw{{\mathbf{w}}}
\def\rvx{{\mathbf{x}}}
\DeclareMathAlphabet{\mathsfit}{\encodingdefault}{\sfdefault}{m}{sl}
\SetMathAlphabet{\mathsfit}{bold}{\encodingdefault}{\sfdefault}{bx}{n}
\def\gL{{\mathcal{L}}}
\newtheorem{assumption}{Assumption}
\newtheorem{lemma}{Lemma}
\newtheorem{corollary}{Corollary}
\newcommand{\cD}{\mathcal{D}}
\newcommand{\cF}{\mathcal{F}}
\newcommand{\cR}{\mathcal{R}}
\newcommand{\cT}{\mathcal{T}}
\newcommand{\cO}{\mathcal{O}}
\newcommand{\bE}{\mathbb{E}}
\newcommand{\bR}{\mathbb{R}}
\newcommand{\rW}{\mathrm{W}}
\newcommand{\da}{\text{DA}}
\newcommand{\xd}{x^{\cD}}
\newcommand{\yd}{y^{\cD}}
\newcommand{\xr}{x^{\cR}}
\newcommand{\yr}{y^{\cR}}
\newcommand{\xda}{x^{\da}}
\newcommand{\yda}{y^{\da}}
\newcommand{\rij}{R_{i,j}}
\newcommand{\ind}{\mathds{1}}
\newcommand{\acc}{\text{Acc}}
\tikzset{-latex-/.style={decoration={
  markings,
  mark=at position .75 with {\arrow{>}}},postaction={decorate}}}
\definecolor{tabblue}{HTML}{1F77B4}
\definecolor{taborange}{HTML}{FF7F0E}
\definecolor{tabred}{HTML}{D62728}
\definecolor{tabcyan}{HTML}{17BECF}
\definecolor{tabgreen}{HTML}{2CA02C}
\definecolor{tabbrown}{HTML}{8C564B}
\definecolor{tabpurple}{HTML}{9467BD}
\definecolor{darkpowderblue}{rgb}{0.0, 0.2, 0.6}
\definecolor{darkred}{rgb}{0.55, 0.0, 0.0}
\crefname{equation}{Eq.}{Eqs.}
\Crefname{equation}{Equation}{Equations}
\crefname{section}{Sec.}{Secs.}
\Crefname{section}{Section}{Sections}
\crefname{appendix}{App.}{Apps.}
\Crefname{appendix}{Appendix}{Appendixes}
\crefname{proposition}{Prop.}{Props.}
\Crefname{proposition}{Proposition}{Propositions}
\crefname{algorithm}{Alg.}{Algs.}
\Crefname{algorithm}{Alg}{Algs}
\crefname{figure}{Fig.}{Figs.}
\Crefname{figure}{Fig}{Figs}
\crefname{theorem}{Thm.}{Thms.}
\crefname{theorem}{Thm}{Thms}
\crefname{corollary}{Cor.}{Cors.}
\crefname{corollary}{Cor.}{Cors.}
\newcommand{\ADabb}{AD}
\newcommand{\FIG}{Fig.}
\newcommand{\KLDIV}{KL-Divergence}
\newcommand{\RESNETNINE}{ResNet-9}
\newcommand{\RESNETEIGHTEEN}{ResNet-18}
\newcommand{\LIVINGSEVENTEEN}{ImageNetLiving-17}
\newcommand{\CIFARTEN}{Cifar-10}
\newcommand{\GA}{Gradient Ascent}
\newcommand{\GAabb}{GA}
\newcommand{\GAD}{Gradient Descent/Ascent}
\newcommand{\GADabb}{GDA}
\newcommand{\GD}{Gradient Descent}
\newcommand{\PRE}{Pretrained}
\newcommand{\ORACLE}{Oracle}
\newcommand{\KLOM}{KLoM}
\newcommand{\SCRUB}{SCRUB}
\providecommand{\realnum}{\mathbb{R}}
\title{Ascent Fails to Forget}
\author{%
  Ioannis Mavrothalassitis\footnotemark[1] \quad
  Pol Puigdemont\footnotemark[1] \quad
  Noam Itzhak Levi\footnotemark[1] \quad
  Volkan Cevher \\
  LIONS, École Polytechnique Fédérale de Lausanne (EPFL), Lausanne, Switzerland \\
  \texttt{\{ioannis.mavrothalassitis, pol.puigdemontplana, noam.levi\}@epfl.ch}
}
\begin{document}
\maketitle
\renewcommand{\thefootnote}{\fnsymbol{footnote}}
\footnotetext[1]{Equal contribution.}
\renewcommand{\thefootnote}{\arabic{footnote}} %

\begin{abstract}

Contrary to common belief, we show that gradient ascent-based unconstrained optimization methods frequently fail to perform machine unlearning, a phenomenon we attribute to the inherent statistical dependence between the forget and retain data sets. This dependence, which can manifest itself even as simple correlations, undermines the misconception that these sets can be independently manipulated during unlearning. We provide empirical and theoretical evidence showing these methods often fail precisely due to this overlooked relationship. For random forget sets, this dependence means that degrading forget set metrics (which, for the oracle, should mirror test set metrics) inevitably harms overall test performance. Going beyond random sets, we consider logistic regression as an instructive example where a critical failure mode emerges: inter-set dependence causes gradient descent-ascent iterations to progressively diverge from the oracle. Strikingly, these methods can converge to solutions that are not only far from the oracle but are potentially even further from it than the original model itself, rendering the unlearning process actively detrimental. A toy example further illustrates how this dependence can trap models in inferior local minima, inescapable via finetuning. Our findings highlight that the presence of such statistical dependencies, even when manifest only as correlations, can be sufficient for ascent-based unlearning to fail. Our theoretical insights are corroborated by experiments on complex neural networks, demonstrating that these methods do not perform as expected in practice due to this unaddressed statistical interplay.

\end{abstract}

\section{Introduction}

Machine learning models have become an integral part of modern research and development methods, even in sensitive domains such as medicine,
chemistry, and cybersecurity. This integration
has led to growing concerns over data privacy and model maintenance.
In this context, the process of selectively removing
the influence of specific training examples from a trained model, namely machine \emph{unlearning}, has emerged as a strongly desired capability \citep{ginart2019making}. Machine unlearning \citep{cao2015, bourtoule2020machineunlearning} has garnered
significant attention due to its diverse applications, ranging from addressing toxic
or outdated data \citep{pawelczyk2024machine, goel2024correctivemachineunlearning},
to resolving copyright concerns in generative models \citep{liu2024unlearning,
dou2024avoidingcopyrightinfringementmachine, triantafillou_llm_copyright_unlearning},
and improving LLM alignment \citep{li2024wmdpbenchmarkmeasuringreducing,
yao2024largelanguagemodelunlearning}.

The fundamental challenge in machine unlearning lies in designing efficient \emph{unlearning algorithms} that do not degrade model performance.

Given a model $h_\theta$ with parameters $\theta$, trained on a dataset $\cD$, and a subset $\cF \subset \cD$ to be forgotten, the goal of any unlearning algorithm is to produce a model $h_\theta^\mathrm{UL}$ that effectively simulates a model trained exclusively on the retain set $\cR = \cD \setminus \cF$~\cite{cao2015towards}.
While retraining from scratch on $\cR$ provides a straightforward solution, it becomes computationally prohibitive on large datasets or as unlearning requests become more frequent.

For convex models, efficient unlearning algorithms
with theoretical guarantees have been developed \citep{neel2021deletion, graves2021amnesiac,
approximate_data_deletion, mahadevan2021certifiablemachineunlearninglinear,
suriyakumar2022algorithms, guo2023certifieddataremovalmachine}, which rely on variants of noisy descent algorithms ({\bf no ascent steps}).
However, due to the non-convex, non-smooth, and high-dimensional nature of deep neural network architectures, provable guarantees for unlearning are often lacking. Consequently, current methods frequently compromise model accuracy or require substantial modifications to training procedures~\citep{sisa_unlearning, li2022largelanguagemodelsstrong}. A notable recent exception is the rewind method for unlearning proposed by~\citet{mu2024rewind}, which provides guarantees for the unlearned model. However, this method is expensive, needing either substantial storage (to retain full model states from previous stages) or significant computational effort (due to the requirement of multiple proximal point iterations).

Many widely used and studied unlearning methods in practice \cite{ginart2019making,kurmanji2023towards,golatkar2020eternal} typically rely on fine-tuning
heuristics to transform the initial model $h_\theta$ into an empirically
unlearned model $\hat{h}_\theta^\mathrm{UL}$. The underlying idea of these methods is to reverse the effect that the forget set $\cF$ has had on the model during training.
Typically, these methods employ some variant of \GA{} on forget set points and \GD{} on retain set points for a small number of fine-tuning epochs~\citep{kurmanji2023unboundedmachineunlearning,goel2023adversarialevaluationsinexactmachine}.
We will refer to these methods as \textit{Descent-Ascent (DA)} unlearning algorithms.

Unfortunately, recent evaluations
and benchmarks demonstrate that DA approaches can be highly unreliable
\citep{hayes2024inexact, kurmanji2023towards, pawelczyk2023incontext}, as they neither possess theoretical performance guarantees nor clear mechanisms defining a stopping criterion for the unlearning process. Additionally, these methods are extremely sensitive to fine-tuning hyperparameters, most crucially the learning rate and the fine-tuning duration.

In this work, we identify an overlooked crucial obstacle for machine unlearning that is not taken into account by DA methods. Concretely, we show that the existence of data dependencies between samples in the forget and retain sets can lead to poor unlearning performance in some cases, as well as complete breakdown, even in convex settings. 

Our main contributions can be summarized as follows:
\begin{enumerate}
\setlength\itemsep{-.1em}
    \item We start by empirically showcasing that DA-based methods fail in practical settings under a robust evaluation and discuss limitations of previous methodologies.
    \item 
    Supported by our empirical findings, we first show theoretically that unlearning random forget sets is impossible without causing model degradation, as unlearning random sets is equivalent in distribution to unlearning samples from the population data distribution.
    \item 
    We move beyond forget and retain sets which share clear statistical dependencies to analyze the simple setting of multi-dimensional logistic regression, where we show inter-set correlations lead to DA failure modes. 
    \item 
    In our logistic regression analysis, we differentiate the impact of DA unlearning based on forget set size. We specifically show that for certain forget set sizes, DA can be harmful to the model, even when employing arbitrary early stopping.
    \item 
    Finally, using low-dimensional examples, we demonstrate how DA can lead the model to suboptimal local minima, which do not align with the minima achieved through retraining.
\end{enumerate}

\textbf{Notation:}
We will use the following notation.
We use uppercase bold letters for matrices $\bm{X} \in \realnum^{m\times n}$, lowercase bold letters for vectors $\bm{x} \in \realnum^{m}$ and lowercase letters for numbers $x\in \realnum$. Accordingly, the $i^{\text{th}}$ row and the element in the $i,j$ position of a matrix $\bm{X}$ are given by $\bm{x}_{i}$ and $x_{ij}$ respectively. We use the shorthand $[n] = \{1,\cdots,n\}$ for any natural number $n$.
Let $\ind_{(\cdot,\cdot)}: \bR\times\bR \rightarrow \{0,1\}$ such that $\ind_{(x,x)}=1$, otherwise for $x \neq y, \ind_{(x,y)}=0$. We will denote our model with parameters $\theta$ as $h_{\theta} : \bR^d \rightarrow \bR$.
We define a training dataset of size $|\cD|$ as a set of samples and labels $\{(\bm{x}_i,y_i)\}_{i=1}^{|\cD|} = \cD$, composed of a ``retain'' set $\cR$ and ``forget'' set $\cF$ such that $|\cD|=|\cR| + |\cF|$. We take ``ascent'' optimization on a sample to mean computing the gradient update w.r.t. to a loss $\nabla_\theta\ell$ and flipping its sign when updating the model parameters.

\vspace{-5pt}
\section{Related Work}
\label{sec:related}
\vspace{-5pt}

\paragraph{Machine unlearning:}

Unlearning methods can be used to either remove particular samples \citep{cao2015towards,ginart2019making,wu2020deltagrad,neel2021deletion,sisa_unlearning}, or to remove subsets of the data which share certain underlying features, captured by abstract concepts \citep{ravfogel2022linear,eldan2023s,kumari2023ablating,zhong2023mquake}. In this work, we focus on the prior, though we believe some of our results may be extended to the latter setting.
Exact unlearning methods \citep{sisa_unlearning} offer theoretical guarantees but often sacrifice accuracy, 
leading to widespread adoption of approximate methods in deep learning. These approximate approaches 
are evaluated through membership inference attacks \citep{golatkar2020eternal,goel2022towards,hayes2024inexact} 
and backdoor removal capabilities \citep{pawelczyk2024machine}. As \citet{thudi2022necessity} note, 
meaningful evaluation must focus on algorithmic behavior rather than individual models due to 
deep learning's stochastic nature. For a review of open problems in machine unlearning, see~\cite{barez2025openproblemsmachineunlearning} and references therein.

\paragraph{Unlearning approaches in deep learning:}
Current approaches primarily use gradient-based methods, including partial fine-tuning 
\citep{goel2022towards}, \ADabb{} combinations \citep{kurmanji2023towards}, 
and sparsity-regularized fine-tuning \citep{jia2024modelsparsitysimplifymachine}. Alternative 
methods employ local quadratic approximations \citep{golatkar2020eternal,li2024fast} or influence 
functions \citep{warnecke2021machine}.  One of the most used unlearning methods, SCRUB 
\citep{hayes2024inexact} fine-tunes models using KL divergence objectives, but faces similar 
underlying challenges as other methods. 
The approach presented in \citet{georgiev2024attributetodeletemachineunlearningdatamodel} introduces a predictive data attribution approach with good unlearning quality under a robust evaluation, although it raises some scalability concerns if we account for the full cost the method. In this work we focus on DA based methods.
\citet{georgiev2024attributetodeletemachineunlearningdatamodel} also observe empirical failures of \GAD{} style updates in certain settings. We attribute these failures to inter-set correlation between forget and retain data, and contribute controlled experiments disentangling random from structurally correlated forget sets, together with theory in non-linear models showing first-step detriment of DA independent of early stopping; see \cref{app:discussion}.

\section{Ascent Methods Fail in the Wild}

To evaluate the quality of unlearning rigorously, we adopt the \KLOM{} (KL Divergence of Margins~\citep{georgiev2024attributetodeletemachineunlearningdatamodel}) metric, which quantifies the distributional difference in predictions between the unlearned model and the oracle model. \KLOM{} measures the \KLDIV{} between the classifier margin distributions of 100 unlearned models against 100 \ORACLE{} models. A \KLOM{} score approaching zero, the lowest possible, indicates near-perfect unlearning.

In our main experiments, we examine two gradient-based unlearning approaches which are commonly used as baselines: \GA{} (\GAabb{}) which performs steps in the direction of the gradients of the model on forget set, and \GAD{} (\GADabb{}) which adds descent steps on the retain set after the initial ascent steps, for each epoch. We conduct these experiments using \RESNETNINE{} models on \CIFARTEN{} \citep{Krizhevsky09learningmultiple} under forget sets of different sizes and properties. \FIG{}~\ref{fig:ascent_fails_text} illustrates our results on a selected forget set. We observe that both \GAabb{} and \GADabb{} methods either fail to substantially move away from the pretrained initialization or severely degrade model performance. Our choices in model, dataset, forget sets and results are consistent with the values reported in \citet{georgiev2024attributetodeletemachineunlearningdatamodel}.
\begin{figure*}[t!]
    \centering
\includegraphics[width=.75\linewidth]{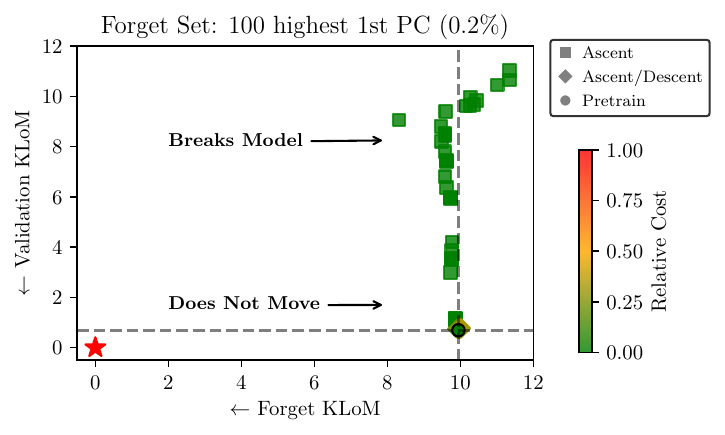} 
    \caption{{\bf{Ascent Fails to Forget}.}  
    We apply \GA{} and \GAD{} to \PRE{} models to unlearn a selected forget set containing points of the first Principal Component (PC) of the influence matrix from \CIFARTEN{}. \KLOM{} scores (x-axis, y-axis) measure the quality of unlearning on a given set by comparing the distribution distance between unlearned predictions and \ORACLE{} predictions (0 means perfect unlearning \textcolor{red}{$\bigstar$}). We measure \KLOM{} values over each data-point in a set and report the 95th percentile in each group. Different (x/y) points in the plot represent results for different unlearning method hyper-parameters. The colors indicate what is the relative cost of an unlearning method when compared to fully retraining the model. A \PRE{} model ($\circ$) is similar to an \ORACLE{} on the validation set but very different on the forget set. On such set, unlearning with \GA{} or \GAD{} either breaks the model or does not move much from the \PRE{} starting point, we find this behavior to be consistent in most sets. Forget set selection and \KLOM{} score metric follow \citet{georgiev2024attributetodeletemachineunlearningdatamodel}. Further details on method and evaluation hyper-parameters can be found in the Appendix.
    }
    \vspace{-10pt}
    \label{fig:ascent_fails_text}
\end{figure*}

These outcomes highlight an important limitation in the empirical evaluation of \GAabb{} based unlearning methods. It is necessary for a hyperparameter selection criteria to be defined, ideally, before deploying the method or at least without measuring at the final target metric. It is not fair to do an instance-specific selection of the best run after having seen the evaluation due to bias. For a small enough forget set and a large enough grid of runs with different hyperparameters we could trick ourselves into a false sense of unlearning even with vanilla \GAabb{}. This problem is showcased in \FIG{} \ref{fig:ascent_needs_stopping} and is rooted in the missing targets problem \citep{hayes2024inexact, georgiev2024attributetodeletemachineunlearningdatamodel}, which amounts to the difficulty of not having a target stopping value for \GAabb{} based unlearning optimization procedures. On top of that, different points seem to unlearn at different rates \citep{georgiev2024attributetodeletemachineunlearningdatamodel} which suggests that such a stopping value would need to be point-specific.
\begin{figure*}[t!]
    \centering
\includegraphics[width=.95\linewidth]{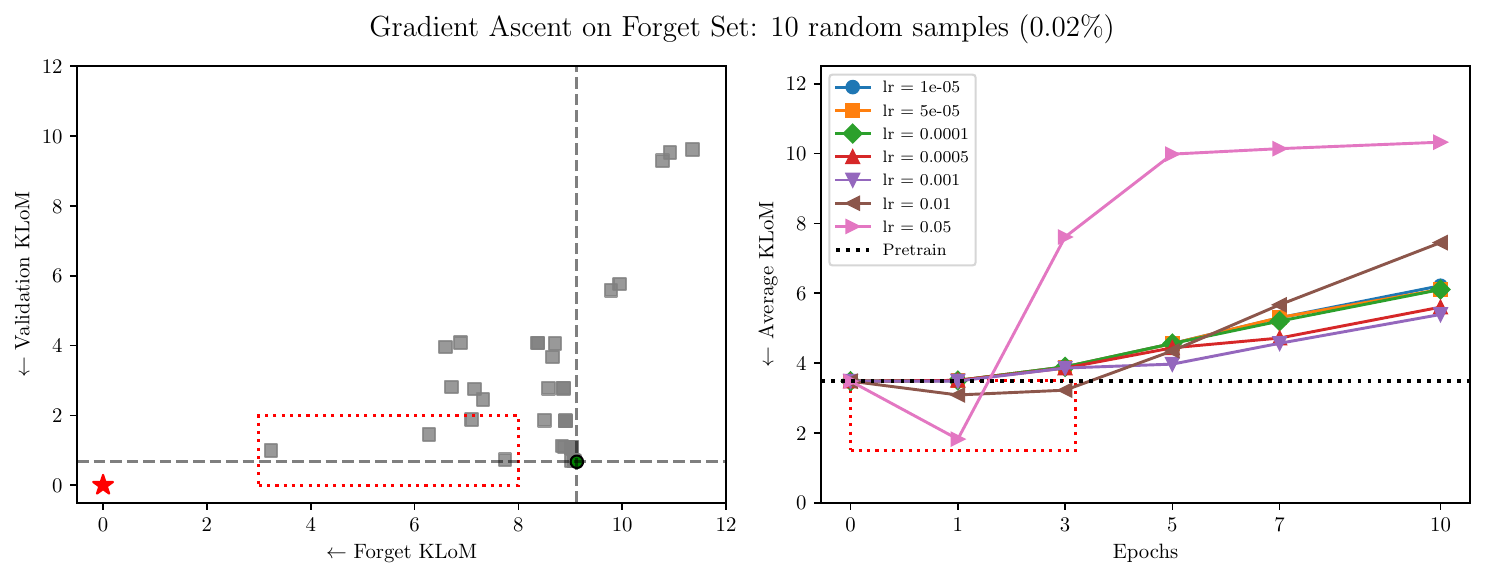} 
    \caption{{\bf{The Ascent Forgets Illusion}.}  
    The left plot shows \KLOM{} scores of \GA{} when unlearning just 10 random samples (axis and points follow \FIG{} \ref{fig:ascent_fails_text}). Some runs (\textcolor{red}{- - -}) seem to achieve unlearning without breaking the model. On the right, we present the average \KLOM{} between retain, validation and forget sets (y-axis) along time of unlearning (x-axis). We observe that in order for \GA{} to unlearn such (easy) sets in practice, one would need to (i): select the learning rate, (ii) know when to stop fine-tuning.
    }
    \label{fig:ascent_needs_stopping}
        \vspace{-10pt}
\end{figure*}

\begin{figure*}[t!]
    \centering
\includegraphics[width=.95\linewidth]{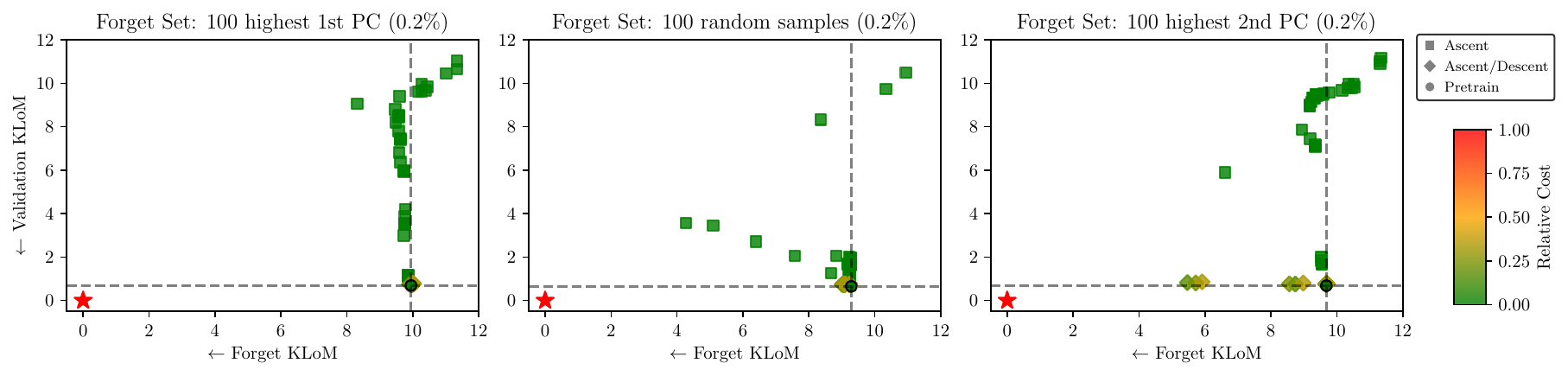} 
    \caption{{\bf{Different Unlearning Difficulties}.}
    We present the \KLOM{} scores of \GA{} and \GAD{} when unlearning over different forget sets (axes and points follow \FIG{} \ref{fig:ascent_fails_text}). In general, the majority of runs either do nothing or break the model. Empirically, we find highly important points (left) to be the hardest to unlearn with zero realizations showing any unlearning signs at all. Random samples (center) show some \GA{} runs improving the forget \KLOM{} but with significant degradation in the models. Finally, for a set with second PC points (right) we observe some \GAD{} runs improve the forget \KLOM{} without breaking the model but at a high cost, around $25\%$ of retraining an \ORACLE{} for unlearning $0.2\%$ of the data.
    }
    \label{fig:ascent_fails_corr}
    \vspace{-10pt}
\end{figure*}

We also observe that the difficulty of unlearning varies greatly depending on the specific forget set selected, as shown in \FIG{}~\ref{fig:ascent_fails_corr}. In general, we find \GAabb{} and \GADabb{} methods to be fragile. The extreme sensitivity to hyperparameters, unclear stopping criteria for \GA{}, and substantial computational costs in using \GD{} on the retain set to fix models, severely restrict their practicality. Fundamentally, performing gradient ascent on individual points is not aligned with the core definition of unlearning, making these approaches unsuitable for reliable and consistent machine unlearning in real-world scenarios. In the Appendix, we include the methodology details for forget sets, \KLOM{}, hyperparameters along with additional results on more forget sets, models (\RESNETEIGHTEEN{} \citep{resnet2015kaiming}) and datasets (\LIVINGSEVENTEEN{} \citep{deng2009imagenet, santurkar2020breeds}).

\definecolor{darkpowderblue}{rgb}{0.0, 0.2, 0.6}
\definecolor{darkred}{rgb}{0.55, 0.0, 0.0}

Motivated by these results, the following sections aim to demonstrate that the underlying statistical data dependencies may be a central cause for the typical failure modes of DA based unlearning methods, both in general, and in some useful tractable settings.

\section{Unlearning and Random Sets}
\label{sec:generalization_random_sets}

A natural starting point for understanding how data correlations influence the unlearning process is that of random forget sets. If a forget set is selected uniformly at random from the original set, it is evident that the two sets would have high statistical dependence between them.
Therefore, we would naturally expect that metrics measured in the forget set would be indistinguishable to those of the test set and very close to those of the retain for the oracle. We can state this formally in ~\cref{lemm:randomsets_main} for accuracy; the same reasoning extends to other metrics (e.g., regression losses) after modifying the metric definition. The proof of ~\cref{lemm:randomsets_main} can be found in ~\cref{app:randomsets}.

\begin{restatable}[Random Sets]{lemma}{randomsets}
    \label{lemm:randomsets_main}
    Given a true distribution of samples $P_\cT$ and a forget set $\cF$ chosen uniformly at random from the dataset and a oracle model with parameters $\theta$,
    then the probability that the accuracy on the test set $\acc_{\cT}$ and the forget set $\acc_{\cF}$ diverge from one another by more than  $\epsilon$ is upperbounded by the following inequality:
    \[P\left(|\acc_{\cT}-\acc_{\cF}|\geq \epsilon\right)\leq 2 \exp\left(-2|\cF|\epsilon^2\right). \, \]
\end{restatable}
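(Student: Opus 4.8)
The bound is exactly the two-sided Hoeffding inequality for an average of $|\cF|$ independent random variables taking values in $[0,1]$, so the plan is to recast $\acc_\cF$ as such an average whose mean is $\acc_\cT$, and then invoke Hoeffding. First I would introduce, for each forget sample $(\bm{x}_i, y_i) \in \cF$ with $i \in [|\cF|]$, the correctness indicator $Z_i := \ind_{(h_\theta(\bm{x}_i), y_i)} \in \{0,1\}$, so that by definition $\acc_\cF = \frac{1}{|\cF|}\sum_{i=1}^{|\cF|} Z_i$ is an empirical average of $[0,1]$-bounded variables. The goal is then to show that these $Z_i$ are (conditionally) independent and that each has expectation equal to the test accuracy, which I take to be the population accuracy $\acc_\cT = \bE_{(\bm{x},y)\sim P_\cT}[\ind_{(h_\theta(\bm{x}),y)}]$ of the oracle.

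The key structural observation — and the crux of the argument — is that the oracle $\theta$ is fit only on the retain set $\cR = \cD \setminus \cF$, so it is statistically independent of the forget samples. Concretely, I would condition on the realized retain set (equivalently, on $\theta$). Because $\cF$ is drawn uniformly at random and the data are i.i.d.\ from $P_\cT$, conditioned on the retain configuration the forget points $(\bm{x}_i, y_i)$ remain i.i.d.\ draws from $P_\cT$ that are independent of $\theta$; hence the $Z_i$ are conditionally i.i.d.\ $\mathrm{Bernoulli}(\acc_\cT)$ with $\bE[Z_i \mid \theta] = \acc_\cT$. This is precisely the ``random sets look like population samples'' principle the section is built on, and it is exactly where the uniform-random choice of $\cF$ is essential.

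Finally I would apply Hoeffding's inequality to the conditionally i.i.d.\ bounded variables $Z_1, \dots, Z_{|\cF|}$, which gives
\[
P\left(|\acc_\cF - \acc_\cT| \geq \epsilon \mid \theta\right) \leq 2\exp\left(-2|\cF|\epsilon^2\right).
\]
Since the right-hand side does not depend on $\theta$, taking expectation over the oracle via the law of total probability removes the conditioning and yields the stated unconditional bound.

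The main obstacle is not the concentration step, which is routine, but the clean justification of the independence and the identity $\bE[Z_i]=\acc_\cT$: once the forget samples are argued to be exchangeable with fresh test draws for a retain-trained oracle, the exponential bound follows immediately. A minor alternative, if one prefers to treat $\cD$ as fixed and $\cF$ as a without-replacement subsample of it, is to invoke the version of Hoeffding's inequality valid for sampling without replacement, with $\acc_\cT$ replaced by the empirical accuracy over $\cD$; the same $2\exp(-2|\cF|\epsilon^2)$ bound holds, so the statement is robust to this modeling choice.
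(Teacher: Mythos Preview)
Your proposal is correct and follows essentially the same route as the paper: define the per-sample correctness indicators, argue their mean equals the population accuracy $\acc_\cT$, and apply the two-sided Hoeffding inequality. Your treatment is in fact more careful than the paper's, since you make explicit the conditioning on the retain-trained oracle $\theta$ to justify independence of the $Z_i$ from the model, a step the paper leaves implicit; the paper simply asserts independence and computes $\bE[\acc_\cF]=\acc_\cT$ by linearity before invoking Hoeffding.
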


\cref{lemm:randomsets_main} suggests that any unlearning method which successfully approximates the oracle should result in a model which performs equally well on both the forget set and test set.
Therefore, an unlearned model with poor forget set performance will statistically diverge from an oracle model, whose forget set accuracy reflects the accuracy of the test set and is high for modern
machine learning tasks. Consequently, methods based on DA that explicitly degrade a metric on the forget set might be more harmful rather than beneficial (this critique does not apply to non-\GAD{} approaches such as influence-function or certified-unlearning methods, which do not aim to worsen the forget-set metric). This raises the following question:

\textit{(I)~Do data dependencies, in general, cause DA methods to have a detrimental effect on the models, without actually unlearning?}

While the answer to this question under statistical dependencies is apparently positive, as stated in~\cref{lemm:randomsets_main}, when considering limited data dependencies, the answer becomes more convoluted. Before proceeding to the discussion regarding this, we would like to point out that a ``good'' unlearning algorithm should not be harmful to the model regardless of the input forget set $\cF$, even for random sets.

In the following sections, we analyze several tractable scenarios. We find that in many cases the answer to (I) is positive, implying that data dependencies do, in fact, cause DA methods to have a detrimental effect on the models.

\section{Models Diverge from Retraining Solutions Under DA Unlearning}
\label{sec:optimization_results}

We begin our study with logistic regression in a high-dimensional, nearly orthogonal setting where correlations are only between samples on the same dimension. We then generalize to cross-dimensional correlations, and finally study a nonlinear example in low-dimensions on a small fixed dataset.

\subsection{High Dimensions: Correlated Data Causes Diverging Solutions in Logistic Regression}
\label{sec:high_d}

Here, we study the problem of binary logistic regression with a ridge parameter $\lambda$, and weights $\rvw$ on nearly orthogonal data in $d$ dimensions.
Based on the work of~\citet{soudry2024implicitbiasgradientdescent}, we use the exponential loss $\ell_i = e^{y_i h_\theta(\rvx_i)}$ as a more tractable proxy for the logistic loss.
The pre-training ($\cD$), retraining ($\cR$) and GDA optimization methods ($\da$) will minimize their respective losses
\begin{equation}
\begin{aligned}
\gL_\cD&=
    \frac{1}{|\cD|}\sum_{i=1}^{\cD} e^{-y_i \cdot \langle \rvw, \rvx_i \rangle} +\frac{\lambda}{2} \|\rvw\|_2^2, 
    \qquad
\gL_\cR, 
=
    \frac{1}{|\cR|}\sum_{i=1}^{\cR} e^{-y_i \cdot \langle \rvw, \rvx_i \rangle} +\frac{\lambda}{2} \|\rvw\|_2^2,
    \\
\gL_\mathrm{DA}&=
    \frac{1}{|\cR|}\sum_{i=1}^{\cR} e^{-y_i \cdot \langle \rvw, \rvx_i \rangle}
    -
    \frac{1}{|\cF|}\sum_{i=1}^{\cF} e^{-y_i \cdot \langle \rvw, \rvx_i \rangle}
    +\frac{\lambda}{2} \|\rvw\|_2^2.  
    \end{aligned}
    \label{eq:DA_logistic}
\end{equation}

\subsubsection{Data Correlations on a Single Dimension}
\label{sec:one_dim}

We start from the case of a semi orthogonal dataset. Using the following assumptions:
\begin{assumption}\label{ass:semi-orthogonal}
The data is separable into orthogonal sets $S_j$ for each coordinate $j$. More specifically, after jointly permuting samples and coordinates, the data matrix is block diagonal: each block corresponds to samples $S_j$ that have nonzero entries only on a disjoint coordinate set $C_j$. Samples from different sets are orthogonal; within a set, samples may be correlated.
\end{assumption}
\begin{assumption}\label{ass:magnitude}
For a coordinate $j$ it holds that for all samples $i$ with $x_{i,j} \neq 0$, $y_i \cdot x_{i,j} = 1$.
\end{assumption}
These assumptions correspond to a dataset in $d$ dimensions where there are sets of samples on orthogonal axes to one another. As a result, data points that lie in different sets $S_j$ are perfectly orthogonal and uncorrelated; however, data points that lie in the same set may be correlated with one another.

Recall our hypothesis that data dependencies can cause DA methods to degrade model metrics, instead of converging to an oracle model, we will pick a subset of a set $S_j$ as our forget set. This will allow for a simplistic analysis while testing the hypothesis for a highly correlated forget set.

Let $|\cR_j|$ the size of the retain set for samples with $x_{i,j}\neq 0$, then in order to model the behavior of the minimizers of~\cref{eq:DA_logistic}, for forget sets of different sizes, we define the $j$th forget set fraction size as $|\cF_j| = \alpha \cdot |\cR_j|$. 
A simple example of this setting can be a set of retain points of $x_j=1,y_j=1$
and a set of forget points of $x_j=-1,y_j=-1$, where we are practically requested to remove all (or some) of the negative samples.
The effect of unlearning a forget set on a particular coordinate axis $j$, can then be shown to obtain closed form solutions as given by \cref{lemm:closedform}, 
proven in~\cref{app:singledim}.
\begin{restatable}[Closed Form]{lemma}{closedform}
\label{lemm:closedform}
Let $w_j^\cD,w_j^\cR$ and $w_j^\da$ be the $j^\text{th}$ coordinate of \textbf{any} local minima/maxima for the logistic regression problems defined in ~\cref{eq:DA_logistic}, then they admit the form:
\begin{align*}
    w_j^{\cD} 
    = \rW\left(\frac{(1+\alpha)|\cR_j|}{\lambda|\cD|}\right),w_j^{\cR} = \rW\left(\frac{|\cR_j|}{\lambda|\cR|}\right),
w_j^{\da} 
= \rW\left(\frac{(1-\alpha|\cR|/|\cF|)|R_j|}{\lambda |R|}\right),
\end{align*}
where $W(z)$ corresponds to the Lambert-W function, the solution to $z=W(z) e^{W(z)}$. 
\end{restatable}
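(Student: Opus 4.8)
The plan is to exploit Assumptions~\ref{ass:semi-orthogonal} and~\ref{ass:magnitude} to collapse each of the three losses in~\cref{eq:DA_logistic} into a sum that \emph{separates} across coordinates, and then to solve the resulting one-dimensional stationarity conditions with the Lambert-$\rW$ function. First I would observe that, by the block structure of Assumption~\ref{ass:semi-orthogonal}, a sample $i$ belonging to the block $S_j$ has its nonzero entry on coordinate $j$, so that $\langle \rvw, \rvx_i\rangle = w_j x_{i,j}$. Assumption~\ref{ass:magnitude} then gives $y_i x_{i,j} = 1$, whence the margin simplifies to $-y_i\langle \rvw,\rvx_i\rangle = -w_j$ and the per-sample loss is $e^{-w_j}$, \emph{independently} of which sample of $S_j$ was chosen. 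This is exactly the point at which intra-block correlation becomes irrelevant and only the block counts survive.

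Grouping the samples in each loss by block, the objectives become separable, e.g.
\[
\gL_\cD = \frac{1}{|\cD|}\sum_j |\cD_j|\, e^{-w_j} + \frac{\lambda}{2}\sum_j w_j^2,
\]
with $|\cD_j|$ the number of dataset samples on coordinate $j$, and analogously for $\gL_\cR$ and $\gL_\mathrm{DA}$ (the ascent term entering with a minus sign, per the convention fixed in the Notation paragraph). Because each objective is a sum of functions of a single $w_j$, its gradient and Hessian are diagonal; hence \emph{any} local minimum or maximum must satisfy the coordinate-wise stationarity condition $\partial_{w_j}\gL = 0$ for every $j$, which is what lets me treat the coordinates one at a time and recover the claim for arbitrary extrema rather than only the global optimum.

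Next I would write out the stationarity condition for each loss. Differentiating, every case takes the form $\lambda w_j = c_j\, e^{-w_j}$, i.e. $w_j e^{w_j} = c_j/\lambda$, whose solutions are by definition $w_j = \rW(c_j/\lambda)$. It then remains to read off the constant $c_j$: for $\gL_\cD$ one gets $c_j = |\cD_j|/|\cD|$, which equals $(1+\alpha)|\cR_j|/|\cD|$ after substituting $|\cD_j| = |\cR_j| + |\cF_j|$ and $|\cF_j| = \alpha|\cR_j|$; for $\gL_\cR$ one gets $c_j = |\cR_j|/|\cR|$; and for $\gL_\mathrm{DA}$ one gets $c_j = |\cR_j|/|\cR| - |\cF_j|/|\cF|$, which rearranges to $(1 - \alpha|\cR|/|\cF|)|\cR_j|/|\cR|$ using $|\cF_j| = \alpha|\cR_j|$. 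Substituting these three constants into $\rW(c_j/\lambda)$ yields precisely the stated closed forms.

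The routine differentiation is not the difficult part; the main obstacle is the nonconvexity of $\gL_\mathrm{DA}$. Its coefficient $c_j = |\cR_j|/|\cR| - |\cF_j|/|\cF|$ can be negative, so the argument of $\rW$ may fall in $[-1/e,0)$, where the Lambert-$\rW$ relation has \emph{two} real branches (and below $-1/e$ none), corresponding to coexisting minima and maxima of the one-dimensional problem. I would therefore be careful to phrase the conclusion through the defining identity $z = \rW(z)e^{\rW(z)}$, which holds on every branch, so that the lemma's assertion that ``any local minima/maxima admit the form'' is exactly the statement that each extremal coordinate solves the stationarity equation, with branch selection (and existence) left as the genuinely delicate point rather than the algebra.
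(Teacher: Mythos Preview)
Your proposal is correct and follows essentially the same route as the paper's proof: use Assumptions~\ref{ass:semi-orthogonal} and~\ref{ass:magnitude} to reduce each loss to a separable sum in the coordinates, write the coordinate-wise stationarity condition $\lambda w_j = c_j e^{-w_j}$, and invoke the Lambert-$\rW$ identity to read off the constants. Your additional remarks on separability (to justify the ``any local minima/maxima'' phrasing) and on branch multiplicity for the DA case are welcome clarifications but do not change the underlying argument.
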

It follows directly from~\cref{lemm:closedform}, that by changing the value of $\alpha$, which determines the ratio of the size of the forget set to that of the retain in this coordinate, the solutions will be ordered by their magnitude. 
Concretely, \cref{lemm:logidivergence} shows that the DA solution is always \textbf{farthest} away from the oracle solution. For example, in 1D, ``farthest away'' means that $(w_j^\da - w_j^\cD)$ and $(w_j^\cR - w_j^\cD)$ have opposite signs, so $w_j^\da$ and $w_j^\cR$ lie on opposite sides of the pre-trained solution $w_j^\cD$; therefore any step toward $w_j^\da$ moves the model away from the oracle $w_j^\cR$.
Meanwhile, the oracle and pre-trained solutions remain close and more importantly the DA solution and the oracle solution lie in opposite directions with respect to the initial solution of pre-training $w_j^\cD$. 
This observation implies that performing DA in this setup always converges away from the oracle solution, thus doing nothing at all is a better strategy than DA. 
The aforementioned observation can be formally decomposed in the following lemmas. We prove \cref{lemm:logidivergence} in~\cref{app:logidivergence}, which gives a formal statement regarding the fact that the minima of DA and the oracle are in opposite directions with respect to the minimum of the intial dataset $\cD$.
\begin{restatable}[Divergence Logistic Regression]{lemma}{divlogiregress}\label{lemm:logidivergence}
    Let $w_j^\cD,w_j^\cR$ and $w_j^\da$ the $j^{\text{th}}$ coordinate of the convergence point for the logistic regression problem for the original set $\cD$, the retain set $\cR$ and the Descent Ascent method respectively. Then for a range of $\alpha$ we have that: $\left(w_j^\da-w_j^\cD\right)\cdot\left(w_j^\cD-w_j^\cR\right)\geq 0$.
\end{restatable}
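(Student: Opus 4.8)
The plan is to reduce the sign condition on the weights to a sign condition on the (rational) arguments of the Lambert-$\rW$ function, using the closed forms supplied by \cref{lemm:closedform} together with the monotonicity of $\rW$. Write the three arguments as
\[
a_\cD=\frac{(1+\alpha)|\cR_j|}{\lambda|\cD|},\qquad a_\cR=\frac{|\cR_j|}{\lambda|\cR|},\qquad a_\da=\frac{(1-\alpha|\cR|/|\cF|)|\cR_j|}{\lambda|\cR|},
\]
so that $w_j^\cD=\rW(a_\cD)$, $w_j^\cR=\rW(a_\cR)$, and $w_j^\da=\rW(a_\da)$. On its principal branch $\rW$ is strictly increasing on $[-1/e,\infty)$; hence, as long as the three arguments lie in this domain, $\operatorname{sign}(w_j^A-w_j^B)=\operatorname{sign}(a_A-a_B)$ for any pair. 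Consequently the target product $(w_j^\da-w_j^\cD)(w_j^\cD-w_j^\cR)$ has the same sign as $(a_\da-a_\cD)(a_\cD-a_\cR)$, and the lemma reduces to showing that $a_\cD$ lies between $a_\da$ and $a_\cR$.

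Next I would compute the relevant differences after factoring out the common positive constant $|\cR_j|/\lambda$ and using $|\cD|=|\cR|+|\cF|$. The key identities are
\[
a_\cR-a_\da=\frac{|\cR_j|}{\lambda}\cdot\frac{\alpha}{|\cF|},\qquad a_\cR-a_\cD=\frac{|\cR_j|}{\lambda}\cdot\frac{|\cF|-\alpha|\cR|}{|\cR|(|\cR|+|\cF|)},
\]
together with
\[
a_\cD-a_\da=\frac{|\cR_j|}{\lambda}\cdot\frac{\alpha|\cR|(|\cR|+2|\cF|)-|\cF|^2}{|\cR||\cF|(|\cR|+|\cF|)}.
\]
The first identity shows that the oracle argument always strictly exceeds the DA argument (for $\alpha>0$), so $a_\da<a_\cR$, and hence ``$a_\cD$ between $a_\da$ and $a_\cR$'' means precisely $a_\da\le a_\cD\le a_\cR$.

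From the second and third identities, $a_\cD\le a_\cR$ is equivalent to $\alpha\le |\cF|/|\cR|$, and $a_\cD\ge a_\da$ is equivalent to $\alpha\ge |\cF|^2/\bigl(|\cR|(|\cR|+2|\cF|)\bigr)$. Since $|\cF|^2/(|\cR|(|\cR|+2|\cF|))\le |\cF|/|\cR|$ always holds (it rearranges to $|\cF|\le |\cR|+2|\cF|$), the two bounds cut out a nonempty interval
\[
\frac{|\cF|^2}{|\cR|(|\cR|+2|\cF|)}\;\le\;\alpha\;\le\;\frac{|\cF|}{|\cR|},
\]
and for every $\alpha$ in this interval we get $a_\da\le a_\cD\le a_\cR$, so $(a_\da-a_\cD)(a_\cD-a_\cR)\ge 0$ and the claimed inequality follows. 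This interval is the ``range of $\alpha$'' in the statement, and I would record it explicitly in the proof.

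The main obstacle is bookkeeping around the Lambert-$\rW$ function rather than the algebra: I must check that all three arguments stay $\ge -1/e$ so real stationary points exist and the principal (increasing) branch is the relevant one. Here $a_\cD,a_\cR>0$ always, while $a_\da\ge 0$ holds exactly on the upper part of the range ($\alpha\le|\cF|/|\cR|$), so restricting to that range simultaneously keeps us on the monotone branch and validates the sign reduction. If instead one tracked the $\rW_{-1}$ branch for a local maximum, the monotonicity direction would reverse and the reduction would have to be redone; I would note that \cref{lemm:logidivergence} concerns the actual convergence points, i.e.\ the principal-branch minimizers. Once this reduction is justified, the remaining steps are the elementary difference computations above.
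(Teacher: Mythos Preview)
Your proposal is correct and follows essentially the same approach as the paper: invoke the closed forms from \cref{lemm:closedform}, use monotonicity of the principal branch of $\rW$ to reduce the ordering on weights to the ordering on arguments, and obtain $a_\da\le a_\cD\le a_\cR$ precisely on the interval $|\cF|^2/\bigl(|\cR|(|\cR|+2|\cF|)\bigr)\le\alpha\le|\cF|/|\cR|$ (the paper writes the lower endpoint equivalently as $|\cF|^2/(|\cR|(|\cF|+|\cD|))$). Your treatment is in fact slightly more explicit than the paper's, since you compute all three pairwise differences and verify the interval is nonempty, and you handle the branch issue cleanly by noting $a_\da\ge 0$ on the stated range.
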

We defer the reader to \cref{app:logidivergence} for the exact range of $\alpha$, for which \cref{lemm:logidivergence} holds, let us point out that the lemma holds for $\alpha \leq |\cF|/|\cR|$, this means that if we were working on a purely 1 dimensional dataset, this lemma would \textbf{always} hold. ~\cref{lemm:logidivergence} answers our original question of whether data correlations cause DA methods to harm the model in the positive. Before proceeding to the study of higher dimensions, we would like to comment on the stability of the process of unlearning under DA methods.

\paragraph{Stability of DA methods:} We begin by characterizing the distance between the different stationary points for the three problems.

~\cref{lemm:distancegrowth} provides an upperbound on the distance between the oracle solution and the initial solution for $\cD$. Its counterpart, ~\cref{lemm:distanceunlearning} provides a lower bound on the distance between the oracle solution and the DA solution. The proof for ~\cref{lemm:distancegrowth} can be found in ~\cref{app:distancegrowth}, while the proof for ~\cref{lemm:distanceunlearning} lies in ~\cref{app:distanceDA}
\begin{restatable}[Distance Growth]{lemma}{distancegrowth}\label{lemm:distancegrowth}
Let $w_j^\cD,w_j^\cR$ the $j^{\text{th}}$ coordinate of the convergence point for the logistic regression problem for the original set $\cD$ and the retain set $\cR$ respectively. It holds that the distance $\Delta_{\cR,\cD} = |w_j^\cD - w_j^\cR| \leq \left|\ln\left((1+\alpha)\frac{|\cR|}{|\cD|}\right)\right|$, for any $\lambda>0$ and $\alpha>0$.
\end{restatable}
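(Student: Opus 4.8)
The plan is to reduce the claim to a single clean inequality about the Lambert-W function by starting directly from the closed forms in~\cref{lemm:closedform}. Writing $a = \frac{(1+\alpha)|\cR_j|}{\lambda|\cD|}$ and $b = \frac{|\cR_j|}{\lambda|\cR|}$, the lemma gives $w_j^\cD = \rW(a)$ and $w_j^\cR = \rW(b)$, and the data-dependent prefactors cancel in the ratio:
\[
\frac{a}{b} = (1+\alpha)\frac{|\cR|}{|\cD|}, \qquad\text{hence}\qquad \ln a - \ln b = \ln\!\left((1+\alpha)\frac{|\cR|}{|\cD|}\right).
\]
It therefore suffices to prove that $|\rW(a) - \rW(b)| \le |\ln a - \ln b|$ for all $a,b>0$, since the right-hand side is exactly the target bound.

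The key tool is the defining relation $\rW(z)\,e^{\rW(z)} = z$. Since $\alpha,\lambda>0$ and all cardinalities are positive, both $a$ and $b$ lie in $(0,\infty)$, where only the principal branch $\rW_0$ is real-valued and satisfies $\rW(z)>0$; taking logarithms of the defining relation then gives the identity $\ln z = \rW(z) + \ln \rW(z)$. Evaluating this at $a$ and $b$ and subtracting yields
\[
\ln a - \ln b = \big(\rW(a) - \rW(b)\big) + \big(\ln \rW(a) - \ln \rW(b)\big).
\]

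The decisive step — and the only place that requires any thought — is a sign argument ruling out cancellation on the right-hand side. Because $\rW$ is strictly increasing on $(0,\infty)$ and $\ln$ is increasing, the two bracketed differences always share the same sign: both are nonnegative when $a\ge b$ and both nonpositive when $a\le b$. Hence $|\ln a - \ln b| = |\rW(a) - \rW(b)| + |\ln \rW(a) - \ln \rW(b)| \ge |\rW(a) - \rW(b)|$, which is precisely the claim $\Delta_{\cR,\cD} = |w_j^\cD - w_j^\cR| \le |\ln((1+\alpha)|\cR|/|\cD|)|$. I do not expect a substantive obstacle: the proof is essentially this monotonicity/no-cancellation observation, and the only bookkeeping is verifying positivity of the two arguments so that the principal branch is single-valued and $\ln\rW(\cdot)$ is well defined, which is immediate from $\alpha,\lambda>0$.
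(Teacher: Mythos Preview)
Your proof is correct and follows essentially the same approach as the paper: both invoke the closed forms from \cref{lemm:closedform}, use the identity $\ln z = \rW(z) + \ln \rW(z)$, and conclude via monotonicity of $\rW_0$. The only cosmetic difference is that the paper splits into two cases according to the sign of $w_j^\cD - w_j^\cR$, whereas you handle both at once by the no-cancellation observation; your packaging as the general inequality $|\rW(a)-\rW(b)|\le|\ln a-\ln b|$ is slightly cleaner but not a different idea.
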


\begin{restatable}[Distance Unlearning]{lemma}{distanceunlearning}\label{lemm:distanceunlearning}
Let $w_j^\cR,w_j^\da$ the $j^{\text{th}}$ coordinate of the convergence point for the logistic regression problem for the retain set $\cR$ and the Descent Ascent method respectively. It holds that for for $\alpha\geq|\cF|/|\cR|$ the distance $\Delta_{\cR,\da} = |w_j^\cR - w_j^\da| \geq W_0\left(|\cR_j|/(\lambda|\cR|)\right) $
\end{restatable}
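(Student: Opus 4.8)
The plan is to substitute the closed forms from \cref{lemm:closedform} and then read off the sign of each solution directly from the defining relation of the Lambert-$W$ function. Set $z := |\cR_j|/(\lambda|\cR|) > 0$, so that \cref{lemm:closedform} gives $w_j^\cR = \rW(z)$ and $w_j^\da = \rW\big((1 - \alpha|\cR|/|\cF|)\,z\big)$. I claim the lemma reduces to two sign facts: $w_j^\cR > 0$ and $w_j^\da \leq 0$. Once these are in hand, the bound is immediate,
\[
|w_j^\cR - w_j^\da| = w_j^\cR - w_j^\da \;\geq\; w_j^\cR \;=\; W_0\!\big(|\cR_j|/(\lambda|\cR|)\big),
\]
which is exactly the claim.

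The key observation that makes both sign facts elementary is that, for any real $w$, the quantity $w e^w$ has the same sign as $w$, because $e^w > 0$. Since $\rW(u)$ is by definition the solution of $w e^w = u$, it follows that $\rW(u)$ carries the same sign as its argument $u$. Applying this to the retain solution: its argument is $z > 0$, hence $w_j^\cR > 0$; moreover, since a positive argument admits only the principal branch as a real value, $w_j^\cR = W_0(z)$, which is what appears in the target bound. Applying the same observation to the DA solution, I would first rewrite the hypothesis $\alpha \geq |\cF|/|\cR|$ as $\alpha|\cR|/|\cF| \geq 1$, equivalently $1 - \alpha|\cR|/|\cF| \leq 0$; multiplying by $z > 0$ shows the DA argument is non-positive, and hence $w_j^\da \leq 0$.

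The only point requiring care is that \cref{lemm:closedform} characterizes $w_j^\da$ merely as \textbf{any} local minimum or maximum, so the sign argument must be made uniform over every admissible stationary point rather than over a single distinguished branch. This is precisely what the ``same sign as the argument'' observation delivers: \emph{any} real root $w$ of $w e^w = u$ with $u \leq 0$ must satisfy $w \leq 0$, independent of which Lambert branch it sits on. I would also add a one-line remark that the hypothesis is exactly what places the DA argument in the regime $u \leq 0$ where a real stationary point can exist, and that granting such existence (as \cref{lemm:closedform} does) the sign bound closes the proof. The main obstacle is thus conceptual rather than computational—correctly handling the branch ambiguity of $w_j^\da$—after which the estimate is a one-line consequence of $w_j^\da \leq 0 < w_j^\cR$.
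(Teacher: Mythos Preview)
Your proof is correct and follows essentially the same route as the paper: both invoke the closed forms of \cref{lemm:closedform} and obtain the bound from $w_j^\da \leq 0 < w_j^\cR = W_0(z)$. Your uniform sign observation (any real root of $we^w=u$ inherits the sign of $u$, independent of branch) is in fact tidier than the paper's treatment, which splits into the boundary case $\alpha=|\cF|/|\cR|$ (where $w_j^\da=0$) and the strict case $\alpha>|\cF|/|\cR|$ (where it appeals to a divergence claim $w_j^\da\to-\infty$ from the proof of \cref{lemm:logidivergence}); your argument subsumes both cases and sidesteps any need to track which Lambert branch the DA stationary point lies on.
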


Employing~\cref{lemm:distancegrowth} and~\cref{lemm:distanceunlearning}, one can derive the following Corollary.

\begin{corollary}\label{cor:asymptoticdist}
    As the ridge $\lambda \rightarrow 0$ for $\alpha \rightarrow |\cF|/|\cR|$, we have that $\Delta_{\cR,\cD} \rightarrow 0$ and $\Delta_{\cR,\da} \rightarrow \infty$.
\end{corollary}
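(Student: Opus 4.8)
The plan is to combine the two preceding bounds directly, treating the two claimed limits separately, since they decouple cleanly: the upper bound on $\Delta_{\cR,\cD}$ from \cref{lemm:distancegrowth} depends only on $\alpha$ (it is uniform in $\lambda$), while the lower bound on $\Delta_{\cR,\da}$ from \cref{lemm:distanceunlearning} depends only on $\lambda$. Consequently the joint limit ``$\lambda\to 0$, $\alpha\to|\cF|/|\cR|$'' reduces to two independent one-parameter limits, and no uniformity argument across $(\lambda,\alpha)$ is needed.

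For the first limit I would start from \cref{lemm:distancegrowth}, which gives $0 \le \Delta_{\cR,\cD} \le \left|\ln\!\left((1+\alpha)\tfrac{|\cR|}{|\cD|}\right)\right|$. The key algebraic observation is that at the boundary value $\alpha = |\cF|/|\cR|$ we have $1+\alpha = (|\cR|+|\cF|)/|\cR| = |\cD|/|\cR|$, so the argument of the logarithm becomes exactly $1$ and the upper bound vanishes. Since this bound is continuous in $\alpha$, letting $\alpha \to |\cF|/|\cR|$ drives the right-hand side to $0$; as $\Delta_{\cR,\cD}$ is a nonnegative distance, the squeeze theorem yields $\Delta_{\cR,\cD} \to 0$. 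This holds for every $\lambda>0$, so the ridge limit plays no role in this part.

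For the second limit I would invoke \cref{lemm:distanceunlearning}, which is valid whenever $\alpha \ge |\cF|/|\cR|$ (so we approach the limit point from above, where the hypothesis holds, and note equality is included), giving $\Delta_{\cR,\da} \ge W_0\!\left(\tfrac{|\cR_j|}{\lambda|\cR|}\right)$. As $\lambda \to 0^+$ the argument $\tfrac{|\cR_j|}{\lambda|\cR|} \to +\infty$, and since the principal branch $W_0$ is increasing and unbounded on $[0,\infty)$ (with $W_0(z)\sim \ln z$ as $z\to\infty$), the lower bound diverges, forcing $\Delta_{\cR,\da} \to \infty$.

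There is no substantial obstacle here; the proof is essentially bookkeeping on top of the two lemmas. The only points requiring care are the algebraic simplification $(1+\alpha)|\cR|/|\cD| = 1$ at the boundary value of $\alpha$, verifying that the hypothesis $\alpha \ge |\cF|/|\cR|$ of \cref{lemm:distanceunlearning} is respected throughout the limit (hence approaching from above), and recalling the monotonicity and unboundedness of $W_0$ on the positive axis. Because each bound isolates one of the two parameters, these observations suffice.
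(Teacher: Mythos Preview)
Your proposal is correct and follows the same approach the paper indicates: it simply says the corollary follows by employing \cref{lemm:distancegrowth} and \cref{lemm:distanceunlearning}, and you have filled in precisely those details. The key observations you isolate---that the bound in \cref{lemm:distancegrowth} is uniform in $\lambda$ and vanishes at $\alpha=|\cF|/|\cR|$ via $(1+\alpha)|\cR|/|\cD|=1$, and that the lower bound in \cref{lemm:distanceunlearning} diverges as $\lambda\to 0$ by unboundedness of $W_0$---are exactly what is needed.
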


\cref{cor:asymptoticdist} demonstrates how unlearning using DA is very volatile and even a few steps of the method can cause the model to diverge. 

\paragraph{A possible stabilization effect of iterative DA:}
So far, we have focused on the behavior of minimizers of~\cref{eq:DA_logistic}, which describes a simultaneous descent-ascent algorithm. In practice, however, iterative methods are typically used, where one first performs a step of ascent on the forget set, followed descent on the retain set. In~\cref{app:IterGA}, we show that for small learning rates $\eta \to 0$, the iterative method is nearly identical to the simultaneous update. Namely the derivative used for the update rule is
\[w_j^{t+1} \leftarrow w_j^t -\eta \left(- \frac{|\cR_j|}{|\cR|} e^{- w_j^t } 
    + \frac{\alpha \cdot |\cR_j|}{|\cF|} e^{- 
     w_j^t } 
     +2 \lambda w_j^t\right),\]
where the only difference is a factor of 2 in front of the regularization that differs from the normal DA loss. We have omitted a term which is of the order of $\cO\left(\eta^2\right)$, since the solution, should it exist has $w_j^t$ small and a term with $\eta^2\rightarrow 0 $ has negligable contribution.

The leading correction term 
$\cO\left(\eta^2\right)$ which was omitted in the update rule above stops the algorithms solution $w_j^\da$ from diverging, since the term is of the form
\[\eta^2\alpha\frac{|\cR_j|^2}{|\cF||\cR|}e^{-2w_j^t}-\eta^2\lambda w_j^t \alpha\frac{|\cR_j|}{|\cF|}e^{-w_j^t},\]
which increases for larger $w_j^t$. This addresses stability concerns; however, it does nothing to remedy our main concern raised in ~\cref{lemm:logidivergence} regarding the harmful effect of these methods on the model.

\subsubsection{Cross Dimensional Data Correlations}
\label{sec:cross_dim}

\begin{figure*}[t!]
    \centering
\includegraphics[width=.45\linewidth]{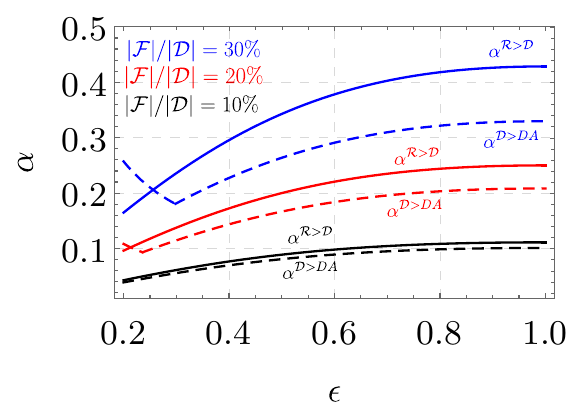} 
    \vspace{-5pt}
    \caption{{\bf{Cross dimensional data correlations $\epsilon$ lead DA to failure for a certain range of values}.} We present the range of $\alpha$ as a function of the correlation $\epsilon$, for which we can guarantee that DA is detrimental. The (\textbf{- -}) lines represent the minimum $\alpha$ for which the coordinates of the original model become bigger than the coordinates of the DA unlearning algorithm and with the (\textbf{--}) the maximum $\alpha$ for which the coordinates of the oracle are bigger than those of the original model.
    }
    \label{fig:alphas}
    \vspace{-10pt}
\end{figure*}

In the previous section we studied the case where our samples are fully correlated, since they existed in a single dimension. In this section we will consider the two dimensional case where we have two sets of samples $S_i$ and $S_j$, which have values $x_i = (0,\dots,0,1,\epsilon,0,\dots,0)$ and $x_j = (0,\dots,0,\epsilon,1,0,\dots,0)$ respectively. We will consider the case where the samples of $S_i$ are all in the retain set, while the samples of $S_j$ are all in the forget. In this case the correlation between the samples in the forget and the retain set depends on $\epsilon$ and therefore this allows us to do a parametric study of the effect of correlation between the forget and the retain on the performance of DA based methods. In similar fashion to the 1 dimensional case we will consider that the forget set $|\cF_{i,j}| = \alpha |\cR_{i,j}|$, where $F_{i,j},R_{i,j}$ the forget and the retain over the $i,j$ dimensions, respectively. In order to facilitate the analysis we will change the coordinate system only for the $i$ and the $j$ coordinate to $x = w_i+\epsilon w_j$ and $y = w_i\epsilon+w_j$. Let $\xr,\yr$ the coordinates for the oracle model stationary point, $\xd,\yd$ for the pretrain model and $\xda,\yda$ for the DA unlearning scheme, we can give the following characterizations:

\begin{restatable}{lemma}{closedretrain}\label{lemm:closedretrain2d}
    The closed form solution for the stationary points for the retrain set is given as:
    $$\xr = \rW\left(\frac{(1+\epsilon^2)|\rij|}{\lambda|\cR|}\right),~~\yr = \frac{2\epsilon}{1+\epsilon^2}\rW\left(\frac{(1+\epsilon^2)|\rij|}{\lambda|\cR|}\right).$$
\end{restatable}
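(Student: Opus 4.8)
The plan is to exploit the block structure from~\cref{ass:semi-orthogonal}: since the data matrix is block diagonal across coordinate sets and the ridge penalty $\frac{\lambda}{2}\|\rvw\|_2^2$ is separable, minimizing $\gL_\cR$ decouples across blocks, so I can restrict attention to the single $(i,j)$ block and optimize over $(w_i,w_j)$ alone. On this block the retain set contains \emph{only} the samples of $S_i$, because every $S_j$ point lies in $\cF$ and therefore never enters $\gL_\cR$; these are $|\rij|$ identical copies of the feature $(\dots,1,\epsilon,\dots)$ with label $+1$, each contributing a margin $\langle\rvw,\rvx\rangle = w_i + \epsilon w_j = x$. Hence the relevant objective reduces to $\frac{|\rij|}{|\cR|}e^{-x} + \frac{\lambda}{2}(w_i^2+w_j^2)$.

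First I would write the stationarity conditions in the original coordinates $(w_i,w_j)$ rather than $(x,y)$, since the penalty is diagonal in the former while the map $x = w_i+\epsilon w_j$, $y = \epsilon w_i + w_j$ is symmetric but not orthogonal. Differentiating gives
\[\lambda w_i = \frac{|\rij|}{|\cR|}e^{-x}, \qquad \lambda w_j = \epsilon\,\frac{|\rij|}{|\cR|}e^{-x}.\]
Dividing the second equation by the first yields the key identity $w_j = \epsilon w_i$. Substituting it back into the change of variables gives $x = (1+\epsilon^2)w_i$ and $y = 2\epsilon w_i$, hence $y = \frac{2\epsilon}{1+\epsilon^2}x$, which is precisely the asserted relation between $\yr$ and $\xr$. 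It remains to pin down $x$: inserting $w_i = x/(1+\epsilon^2)$ into the first condition and rearranging produces
\[x\,e^{x} = \frac{(1+\epsilon^2)|\rij|}{\lambda|\cR|},\]
so by definition of the Lambert-W function $x = \rW\!\left(\frac{(1+\epsilon^2)|\rij|}{\lambda|\cR|}\right)$, and combining with $y = \frac{2\epsilon}{1+\epsilon^2}x$ delivers both stated expressions.

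The algebra is light; the step requiring the most care is the reduction to the $(i,j)$ block — namely checking that the cross-dimensional correlation $\epsilon$ couples $w_i$ and $w_j$ only to each other and to no further coordinate, and that no forget ($S_j$) sample contributes to $\gL_\cR$. As in~\cref{lemm:closedform}, the two displayed equations are necessary at \emph{any} stationary point, so the derived form holds for every such point; and since the right-hand side is positive, $x e^{x} = \mathrm{const}$ has a unique real root on the principal branch $W_0$, which convexity of $\gL_\cR$ identifies as the global minimizer.
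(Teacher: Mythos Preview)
Your proposal is correct and follows essentially the same approach as the paper: both derive the stationarity conditions in the original $(w_i,w_j)$ coordinates, reduce to the single Lambert equation $x e^{x}=\frac{(1+\epsilon^2)|\rij|}{\lambda|\cR|}$, and then read off $y$ as a scalar multiple of $x$. The only cosmetic difference is that the paper obtains $y$ by invoking the identity $z\,e^{-\rW(z)}=\rW(z)$ on the $(x,y)$-system already displayed before the lemma, whereas you first deduce the ratio $w_j=\epsilon w_i$ and substitute --- the underlying computation is the same.
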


\begin{restatable}{lemma}{originalstationary}\label{lemm:originalstationary}
For the stationary points of the original set, one can derive the following ranges.
\begin{eqnarray*}
    \rW\left(\frac{|\rij|}{\lambda|\cD|}((1+\epsilon^2)+2\alpha\epsilon)\right)&\leq \xd \leq& \frac{2\epsilon}{1+\epsilon^2} W\left(\frac{\alpha(1+\epsilon^2)|\rij|}{\lambda|\cD|}\right)+\rW\left(\frac{(1+\epsilon^2)|\rij|}{\lambda|\cD|}\right),\\
    \rW\left(\frac{\alpha(1+\epsilon^2)|\rij|}{\lambda|\cD|}\right)&\leq \yd \leq& \rW\left(\frac{|\rij|}{\lambda|\cD|}(2\epsilon +\alpha(1+\epsilon^2))\right).
\end{eqnarray*}

\end{restatable}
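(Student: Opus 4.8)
The plan is to reduce the two coupled stationarity conditions to a pair of scalar equations in the rotated coordinates $x=w_i+\epsilon w_j$ and $y=\epsilon w_i+w_j$, establish the ordering $\yd\le\xd$, and then read off each of the four bounds by monotonicity of the Lambert-$\rW$ function.

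First I would isolate the part of $\gL_\cD$ that depends on $(w_i,w_j)$; the remaining coordinates decouple and play no role. By \cref{ass:semi-orthogonal,ass:magnitude} the retain samples of $S_i$ contribute $e^{-x}$ and the forget samples of $S_j$ contribute $e^{-y}$, so with $r:=|\rij|/|\cD|$ and $|\cF_{i,j}|=\alpha|\cR_{i,j}|$ this part equals $r\,e^{-x}+\alpha r\,e^{-y}+\tfrac{\lambda}{2}(w_i^2+w_j^2)$. Setting $\partial_{w_i}\gL_\cD=\partial_{w_j}\gL_\cD=0$ gives $\lambda w_i=re^{-x}+\epsilon\alpha re^{-y}$ and $\lambda w_j=\epsilon re^{-x}+\alpha re^{-y}$, and taking the combinations $\lambda x=\lambda(w_i+\epsilon w_j)$, $\lambda y=\lambda(\epsilon w_i+w_j)$ collapses the system to
\begin{align*}
\lambda x&=(1+\epsilon^2)\,r\,e^{-x}+2\epsilon\alpha\,r\,e^{-y},\\
\lambda y&=2\epsilon\,r\,e^{-x}+(1+\epsilon^2)\alpha\,r\,e^{-y}.
\end{align*}
Strong convexity of $\gL_\cD$ (exponentials plus the ridge with $\lambda>0$) makes this stationary point unique, and positivity of every right-hand side forces $\xd,\yd>0$, so the principal branch of $\rW$ will always apply below.

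Next I would establish the ordering $\yd\le\xd$. Subtracting the $y$-equation from the $x$-equation gives the clean identity $\lambda(x-y)=r(1-\epsilon)^2\bigl(e^{-x}-\alpha e^{-y}\bigr)$. If $x\le y$ then $e^{-x}\ge e^{-y}$, so for $\alpha\le1$ the bracket is at least $(1-\alpha)e^{-y}\ge0$ and the right-hand side is nonnegative, forcing $x\ge y$; hence $\yd\le\xd$ throughout the regime $\alpha\le1$ (with equality when $\epsilon=1$). This ordering is the crux of the argument and where I expect the only real subtlety, since two of the four bounds rely on it (the other two hold for all $\alpha>0$).

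With $\yd\le\xd$ in hand, the bounds follow by replacing one exponential by the other and inverting $u\mapsto ue^{u}$ (i.e. applying $\rW$, which is increasing on $[-1,\infty)$). In the $x$-equation, $e^{-y}\ge e^{-x}$ gives $\lambda x\ge r\bigl((1+\epsilon^2)+2\alpha\epsilon\bigr)e^{-x}$, hence the lower bound $\xd\ge\rW\!\bigl(\tfrac{|\rij|}{\lambda|\cD|}((1+\epsilon^2)+2\alpha\epsilon)\bigr)$; symmetrically, $e^{-x}\le e^{-y}$ in the $y$-equation yields the upper bound on $\yd$. Dropping the nonnegative term $2\epsilon re^{-x}$ in the $y$-equation gives $\yd\ge\rW\!\bigl(\tfrac{\alpha(1+\epsilon^2)|\rij|}{\lambda|\cD|}\bigr)=:Q$. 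For the final upper bound on $\xd$ I would feed this estimate back into the $x$-equation: since $e^{-y}\le e^{-Q}=\tfrac{\lambda Q}{\alpha(1+\epsilon^2)r}$, the cross term is at most $\tfrac{2\epsilon\lambda Q}{1+\epsilon^2}$, so after the shift $\tilde x:=x-\tfrac{2\epsilon}{1+\epsilon^2}Q$ and using $e^{-\tilde x}\ge e^{-x}$ one gets $\lambda\tilde x\le(1+\epsilon^2)re^{-\tilde x}$, giving $\xd\le\rW\!\bigl(\tfrac{(1+\epsilon^2)|\rij|}{\lambda|\cD|}\bigr)+\tfrac{2\epsilon}{1+\epsilon^2}Q$, exactly the claimed bound. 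The only bookkeeping left is to check that each argument of $\rW$ is positive, which is immediate since $r,\lambda,\epsilon,\alpha>0$.
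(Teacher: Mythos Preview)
Your proof is correct and follows the same overall architecture as the paper: derive the coupled stationarity system in the rotated coordinates $(x,y)$, establish $\yd\le\xd$, then extract the four bounds by monotonicity of $\rW$ together with the feedback step $e^{-\yd}\le e^{-Q}$ for the upper bound on $\xd$. The bound derivations match the paper's essentially line for line.

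The one substantive difference is how you justify the ordering $\yd\le\xd$. The paper proves this in a separate lemma (\cref{lemm:existenceleq}) via a nonlinear Gauss--Seidel iteration: it starts from the symmetric solution at $\alpha=1$ and shows inductively that the iterates satisfy $y_k\le x_k$ for all $k$, hence so does the limit. Your subtraction identity $\lambda(x-y)=r(1-\epsilon)^2\bigl(e^{-x}-\alpha e^{-y}\bigr)$, combined with uniqueness from strong convexity, gives the same conclusion in a single algebraic step for $\alpha\le1$. This is cleaner and avoids the iterative machinery; the Gauss--Seidel argument, on the other hand, is constructive and does not need to invoke convexity/uniqueness separately. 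Both cover exactly the regime $\alpha\le1$, which is also the implicit scope of the paper's statement.
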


\begin{restatable}{lemma}{dastationary}\label{lemm:dastationary}
    For the stationary points of the model trained by DA methods we can derive the following ranges.
    \begin{eqnarray*}
    \xda 
    \leq
    \rW\left(\frac{|\rij|}{\lambda|\cR|}(1+\epsilon^2)-\frac{|\rij|}{\lambda |\cF|}\alpha 2\epsilon \right),  \qquad
    \yda 
    \leq
    \rW\left(\frac{|\rij|}{\lambda|\cR|}2\epsilon -\frac{|\rij|}{\lambda|\cF|}\alpha (1+\epsilon^2)\right).
    \end{eqnarray*}
\end{restatable}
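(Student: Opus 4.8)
The plan is to reduce the two first–order conditions for $\gL_\mathrm{DA}$ to a pair of scalar equations in the rotated coordinates $x=\xda$ and $y=\yda$, and then read off the bounds from the defining relation of the Lambert-$\rW$ function, mirroring the retrain computation of \cref{lemm:closedretrain2d}. First I would restrict $\gL_\mathrm{DA}$ from \cref{eq:DA_logistic} to the two active coordinates $w_i,w_j$. Because every retain sample lives in $S_i$ and contributes $e^{-x}$ while every forget sample lives in $S_j$ and contributes $e^{-y}$ (with $x=w_i+\epsilon w_j$, $y=\epsilon w_i+w_j$, and labels $+1$ as in the $1$D example), while all other blocks decouple, the loss collapses to
\[
\gL_\mathrm{DA}=\frac{|\rij|}{|\cR|}e^{-x}-\frac{\alpha|\rij|}{|\cF|}e^{-y}+\frac{\lambda}{2}\left(w_i^2+w_j^2\right)+\mathrm{const}.
\]
Writing $\partial_{w_i}\gL_\mathrm{DA}=\partial_{w_j}\gL_\mathrm{DA}=0$ and then forming the combinations $x=w_i+\epsilon w_j$ and $y=\epsilon w_i+w_j$ of these conditions (which turn $\lambda w_i,\lambda w_j$ into $\lambda x,\lambda y$) yields the clean system
\begin{align*}
\lambda x &= \frac{|\rij|}{|\cR|}(1+\epsilon^2)e^{-x}-\frac{2\epsilon\alpha|\rij|}{|\cF|}e^{-y},\\
\lambda y &= \frac{2\epsilon|\rij|}{|\cR|}e^{-x}-\frac{\alpha(1+\epsilon^2)|\rij|}{|\cF|}e^{-y}.
\end{align*}

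Next I would establish the order relation $\xda\ge\yda$, which is the mechanism driving both bounds. Subtracting the two equations, the cross terms recombine and the $(1+\epsilon^2)-2\epsilon$ factors collapse to $(1-\epsilon)^2$, giving
\[
\lambda(x-y)=(1-\epsilon)^2\left(\frac{|\rij|}{|\cR|}e^{-x}+\frac{\alpha|\rij|}{|\cF|}e^{-y}\right)\ge 0,
\]
so $x\ge y$ holds at every DA stationary point, since every coefficient is nonnegative and $\lambda>0$.

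Finally I would convert the scalar equations into the stated Lambert-$\rW$ bounds. Multiplying the $x$-equation by $e^{x}$ gives $\lambda x e^{x}=\frac{|\rij|}{|\cR|}(1+\epsilon^2)-\frac{2\epsilon\alpha|\rij|}{|\cF|}e^{x-y}$; since $x\ge y$ we have $e^{x-y}\ge 1$, so the right-hand side is at most $\frac{|\rij|}{|\cR|}(1+\epsilon^2)-\frac{2\epsilon\alpha|\rij|}{|\cF|}$, i.e. $x e^{x}\le Z_x$ with $Z_x=\frac{|\rij|}{\lambda|\cR|}(1+\epsilon^2)-\frac{2\epsilon\alpha|\rij|}{\lambda|\cF|}$. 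The same manipulation on the $y$-equation (now using $e^{y-x}\le 1$, and $2\epsilon|\rij|/|\cR|\ge0$) gives $y e^{y}\le Z_y$ with $Z_y=\frac{2\epsilon|\rij|}{\lambda|\cR|}-\frac{\alpha(1+\epsilon^2)|\rij|}{\lambda|\cF|}$. Inverting through the Lambert-$\rW$ function then yields $\xda\le\rW(Z_x)$ and $\yda\le\rW(Z_y)$, exactly as claimed. The main obstacle is precisely this inversion step: because $t\mapsto t e^{t}$ is monotone only on $[-1,\infty)$ and $\rW$ is multivalued, I must argue that the stationary point of interest lies on the branch $t\ge-1$ (so that $te^t\le Z$ genuinely implies $t\le\rW_0(Z)$) and that $Z_x,Z_y$ fall in the admissible domain $[-1/e,\infty)$ over the range of $\alpha,\epsilon$ under consideration; outside that regime the bound is either vacuous or requires selecting the appropriate real branch. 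Everything else — the coordinate change, the first-order conditions, and the sign inequality — is routine algebra once the $e^{-x},e^{-y}$ reduction is justified.
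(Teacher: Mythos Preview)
Your proposal is correct and follows essentially the same route as the paper. The paper's proof also starts from the two stationarity equations in $(x,y)$, invokes the ordering $\yda\le\xda$ (stated just before the lemma with the one-line justification $1+\epsilon^2\ge 2\epsilon$, which your subtraction argument makes explicit), replaces $e^{-\yda}$ by $e^{-\xda}$ (resp.\ $e^{-\xda}$ by $e^{-\yda}$) to get $x\le C_x e^{-x}$ and $y\le C_y e^{-y}$, and then inverts via the Lambert-$\rW$ function; your $xe^x\le Z_x$ formulation is the same inequality rewritten, and your caveat about the branch choice is a legitimate refinement that the paper leaves implicit.
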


While the problem becomes more complex in this case and to our knowledge it is not possible to compute an exact solution, the above Lemmas provide enough information for our purpose. The proofs for all of these Lemmas can be found in ~\cref{app:2dsolcharacterization}. In similar fashion to the 1 dimensional case we would like to show that there exists a reasonable $\alpha$, for which we have that $(\xr - \xd)\cdot(\xd-\xda) \geq 0$ and at the same time $(\yr-\yd)\cdot(\yd-\yda) \geq 0$.

\begin{restatable}{lemma}{dbiggerda}\label{lemm:dbiggerda}
    For $\alpha \geq \alpha^{\cD>DA} =\max\left\{\frac{1+\epsilon^2}{2\epsilon}\frac{|\cF|^2}{|\cR|\left(|\cD|+|\cF|\right)},\frac{2\epsilon}{1+\epsilon^2}\frac{|\cF||\cD|}{|\cR|\left(|\cD|+|\cF|\right)}\right\}$ we have that $\xd \geq \xda$ and that $\yd \geq \yda$.
\end{restatable}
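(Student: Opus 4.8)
The plan is to exploit the strict monotonicity of the principal Lambert-$\rW$ branch, which is increasing on $[-1/e,\infty)$, together with the one-sided bounds already established in \cref{lemm:originalstationary} and \cref{lemm:dastationary}. Concretely, \cref{lemm:originalstationary} supplies a \emph{lower} bound $\xd \geq \rW(A_1)$ with $A_1 = \frac{|\rij|}{\lambda|\cD|}\bigl((1+\epsilon^2)+2\alpha\epsilon\bigr)$, while \cref{lemm:dastationary} supplies an \emph{upper} bound $\xda \leq \rW(A_2)$ with $A_2 = \frac{|\rij|}{\lambda|\cR|}(1+\epsilon^2)-\frac{|\rij|}{\lambda|\cF|}2\alpha\epsilon$. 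Hence it suffices to prove $A_1 \geq A_2$, because then $\xd \geq \rW(A_1) \geq \rW(A_2) \geq \xda$ by monotonicity. An identical sandwich handles the $y$-coordinate, using the lower bound $\yd \geq \rW\bigl(\tfrac{\alpha(1+\epsilon^2)|\rij|}{\lambda|\cD|}\bigr)$ and the matching upper bound on $\yda$.

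First I would reduce the $x$-inequality $A_1 \geq A_2$ to a bare inequality in $\alpha$ by cancelling the common positive factor $|\rij|/\lambda$ and collecting the $\alpha$ terms, which gives $2\alpha\epsilon\bigl(\tfrac{1}{|\cD|}+\tfrac{1}{|\cF|}\bigr) \geq (1+\epsilon^2)\bigl(\tfrac{1}{|\cR|}-\tfrac{1}{|\cD|}\bigr)$. Using $|\cD| = |\cR| + |\cF|$ to rewrite $\tfrac{1}{|\cR|}-\tfrac{1}{|\cD|} = \tfrac{|\cF|}{|\cR||\cD|}$ and $\tfrac{1}{|\cD|}+\tfrac{1}{|\cF|} = \tfrac{|\cD|+|\cF|}{|\cD||\cF|}$, the $|\cD|$ factors cancel and solving for $\alpha$ yields precisely the first entry of the maximum, $\alpha \geq \tfrac{1+\epsilon^2}{2\epsilon}\tfrac{|\cF|^2}{|\cR|(|\cD|+|\cF|)}$.

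Next I would repeat the identical manipulation for the $y$-coordinate. Cancelling $|\rij|/\lambda$ in $\tfrac{\alpha(1+\epsilon^2)}{|\cD|} \geq \tfrac{2\epsilon}{|\cR|}-\tfrac{\alpha(1+\epsilon^2)}{|\cF|}$ and gathering the $\alpha$ terms gives $\alpha(1+\epsilon^2)\tfrac{|\cD|+|\cF|}{|\cD||\cF|} \geq \tfrac{2\epsilon}{|\cR|}$, which rearranges to the second entry $\alpha \geq \tfrac{2\epsilon}{1+\epsilon^2}\tfrac{|\cF||\cD|}{|\cR|(|\cD|+|\cF|)}$. Requiring both coordinate inequalities at once forces $\alpha$ to exceed the maximum of the two thresholds, which is exactly $\alpha^{\cD>DA}$, completing the argument.

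The step I expect to be the main obstacle is not the algebra but justifying the monotonicity step cleanly: the upper-bound arguments $A_2$ and its $y$-analogue carry a subtracted term and may become negative, so I must verify they lie in $[-1/e,\infty)$, the domain on which $\rW$ is real and increasing. I would argue that whenever the DA stationary point exists—so that $\xda,\yda$ and the associated $\rW$ values are well-defined—the relevant argument is automatically $\geq -1/e$, while in the degenerate regime where it falls below $-1/e$ the DA solution simply fails to exist and the comparison is vacuous. Beyond that, I would confirm the direction of each cancelled inequality: since $|\rij|,\lambda,|\cD|,|\cF|,|\cR|$ are all strictly positive, no sign flips occur, so the reduction to $A_1 \geq A_2$ is genuinely equivalent to the stated threshold on $\alpha$.
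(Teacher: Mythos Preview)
Your proposal is correct and follows exactly the same approach as the paper: compare the lower bounds on $\xd,\yd$ from \cref{lemm:originalstationary} against the upper bounds on $\xda,\yda$ from \cref{lemm:dastationary}, invoke monotonicity of $\rW$, and solve the resulting linear inequalities for $\alpha$. In fact you supply considerably more detail than the paper, which simply states ``we can just solve both inequalities for $\alpha$'' without writing out the algebra; your explicit reduction via $|\cD|=|\cR|+|\cF|$ and your care about the domain of $\rW$ (which the paper leaves implicit) make the argument more complete.
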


\begin{restatable}{lemma}{rbiggerd}\label{lemm:rbiggerd}
    For $\alpha \leq \alpha^{\cR>\cD} = \min\left\{
    \alpha^{\cR>\cD}_x, \alpha^{\cR>\cD}_y
    \right\}$ we have that $\xr \geq \xd$ and that $\yr \geq \yd$, with $\alpha^{\cR>\cD}_x, \alpha^{\cR>\cD}_y$.
\end{restatable}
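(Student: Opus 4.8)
The plan is to prove the two inequalities separately by comparing the \emph{exact} oracle coordinates from \cref{lemm:closedretrain2d} against the \emph{upper bounds} on the pre-trained coordinates from \cref{lemm:originalstationary}, exploiting that the principal branch of the Lambert--$W$ function (denoted $W_0$ earlier, equivalently $\rW$) is continuous, strictly increasing, and vanishes at the origin on $[0,\infty)$. Writing $c := (1+\epsilon^2)|\rij|/\lambda$, the oracle value $\xr = \rW(c/|\cR|)$ is independent of $\alpha$, whereas the stated upper bound on $\xd$ is the sum of a fixed term $\rW(c/|\cD|)$ and an $\alpha$-dependent term $\tfrac{2\epsilon}{1+\epsilon^2}\rW(\alpha(1+\epsilon^2)|\rij|/(\lambda|\cD|))$. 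Since $|\cD| = |\cR|+|\cF| > |\cR|$, monotonicity of $W_0$ gives $\rW(c/|\cD|) < \rW(c/|\cR|) = \xr$, so the fixed part of the bound already lies strictly below $\xr$; in particular the comparison holds with room to spare at $\alpha = 0$.

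First I would reduce ``$\xr \ge \xd$'' to ``$\xr \ge (\text{upper bound on }\xd)$'', which it suffices to verify. Rearranging, this asks the $\alpha$-dependent term to stay below the strictly positive gap $\xr - \rW(c/|\cD|)$. Because that term is strictly increasing in $\alpha$ (again by monotonicity of $W_0$) and equals $0$ at $\alpha=0$, there is a unique crossing point; solving for it by applying the inverse $W_0^{-1}(u) = u e^{u}$ isolates $\alpha$ explicitly and defines $\alpha^{\cR>\cD}_x$. For every $\alpha \le \alpha^{\cR>\cD}_x$ the inequality $\xr \ge \xd$ then follows.

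The $y$-coordinate is handled the same way but is the more delicate step, and I expect it to be the main obstacle: here the oracle value $\yr = \tfrac{2\epsilon}{1+\epsilon^2}\rW(c/|\cR|)$ carries the prefactor $\tfrac{2\epsilon}{1+\epsilon^2}\le 1$, so one cannot simply compare the arguments of the two Lambert--$W$ terms as in the $x$-case. Instead I would treat $\yr$ as a fixed positive constant and require the argument of the upper bound $\rW\big((2\epsilon+\alpha(1+\epsilon^2))|\rij|/(\lambda|\cD|)\big)$ to lie below $W_0^{-1}(\yr)=\yr e^{\yr}$; inverting $W_0$ and solving the resulting linear inequality for $\alpha$ yields $\alpha^{\cR>\cD}_y$. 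The one subtlety is guaranteeing this threshold is positive, i.e.\ that the upper bound already sits below $\yr$ at $\alpha=0$: a small-argument expansion shows the stated bound is loose by $O((1-\epsilon)^2)$, so positivity is not automatic and instead relies on $|\cD|>|\cR|$ shrinking the bound's argument. This is exactly the non-trivial window of $(\epsilon,\alpha)$ depicted in \cref{fig:alphas}.

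Finally, taking $\alpha^{\cR>\cD}=\min\{\alpha^{\cR>\cD}_x,\alpha^{\cR>\cD}_y\}$ enforces both coordinate inequalities simultaneously, completing the proof. The only remaining work is the routine Lambert--$W$ algebra and the verification that each upper bound is monotone increasing in $\alpha$ (tracking both the linear growth of the numerators and the growth of $|\cD|=|\cR|+|\cF|$ in the denominators), which are mechanical checks.
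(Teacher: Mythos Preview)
Your proposal is correct and follows essentially the same route as the paper: compare the exact oracle coordinates from \cref{lemm:closedretrain2d} against the upper bounds on $\xd,\yd$ from \cref{lemm:originalstationary}, then invert the monotone Lambert--$W$ to solve for the crossing values $\alpha^{\cR>\cD}_x,\alpha^{\cR>\cD}_y$ (the paper states the resulting closed forms and, like you, defers the nonemptiness of the admissible $(\epsilon,\alpha)$ window to the numerical evidence in \cref{fig:alphas}). One small slip: in this setting $|\cD|,|\cR|,|\cF|$ are global constants that do \emph{not} vary with $\alpha$, so monotonicity of the upper bounds in $\alpha$ reduces to monotonicity of the numerators only, making the ``mechanical check'' even simpler than you indicate.
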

We omit the exact values of $\alpha^{\cR>\cD}_x$ and $\alpha^{\cR>\cD}_y$, which can be found in \cref{app:sizederivation} along with the proofs for \cref{lemm:dbiggerda} and \cref{lemm:rbiggerd}.
Since the range of $\epsilon$ for which $(\xr,\yr)\geq(\xd,\yd)\geq(\xda,\yda)$ cannot be resolved analytically, we show numerically in \cref{fig:alphas} that this range is typically large, and broadens as the fraction of samples to be forgotten increases, while the relevant window of correlation strength $\epsilon$ is wider for smaller correlation.

\subsection{Low Dimensions: Descent-Ascent Favors The Wrong Solutions}
\label{sec:low_d}

\begin{figure*}[t!]
    \centering
\includegraphics[width=.95\linewidth]{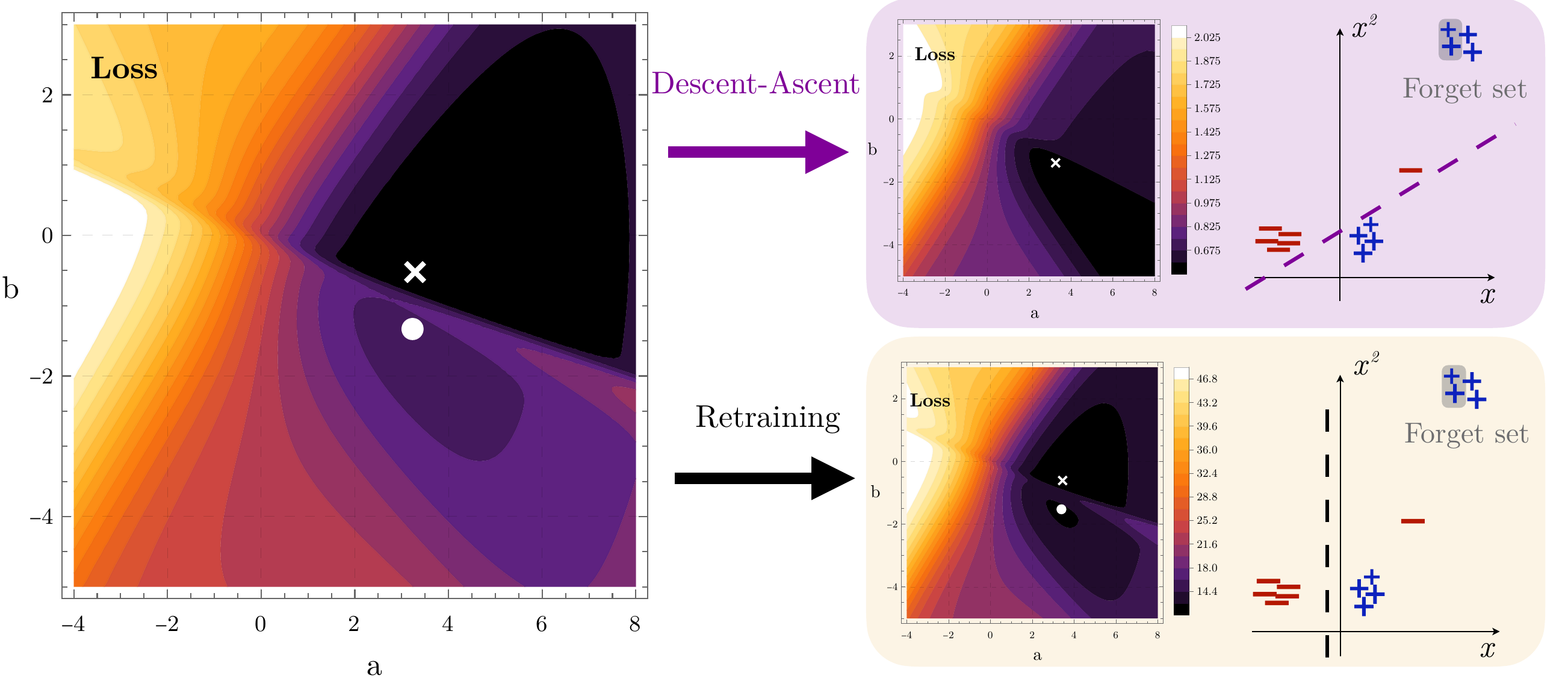} 
    \caption{{\bf{Unlearning certain forget sets leads to the wrong decision boundary under GDA}.}  
    {\it Left:} We show the MSE loss landscape for a pretrained model on the problem described in~\cref{sec:low_d}.
    We denote as ($\color{black}\boldsymbol{\times}$) the global minimum, while ($\color{black}\boldsymbol{\circ}$) is the local minimum.
    {\it Right:} The effective loss landscape observed in the GDA problem (top) and the retraining problem (bottom).
    The combination of these results shows that retraining keeps the model in the same global optimum as the pretrained model, while GDA chooses the local minimum.
    This is clearly manifest in the decision boundaries favored by the different methods, denoted in dashed lines. 
    Next to the contour plots we present two dimensional illustrations of possible decision boundaries between the samples labeled as negative ($\color{darkred}-$) 
    and positive ($\color{darkpowderblue}\boldsymbol{+}$), while the forget set are the two positive points shaded in gray, as described in \cref{sec:low_d}.
    We show the decision boundaries for both GDA (right top) and retraining (right bottom). These conclusions concern the minimizers of a fixed objective and thus do not depend on training dynamics (e.g., step size); see \cref{app:discussion} for details.
    }
    \label{fig:DB}
    \vspace{-10pt}
\end{figure*}

While our previous theoretical analysis demonstrates that DA methods can be harmful to the model, it fails to demonstrate a final concern about these methods, we would like to raise. \textit{ Is it possible to remedy the harmful effects of these methods through finetuning on the retain afterwards?}

The answer that we give to this question unfortunately is not always, for neural networks or in general non-convex function classes. To demonstrate this let us consider a binary classification problem using a two dimensional kernel, with labels 
$y_i \in \{-1,1\}$, data composed of $\rvx_i=(x_i,x_i^2)$ and Mean Squared Error (MSE) loss with ridge regularization $\lambda\in\bR^+$. The network is taken to be a sigmoidal network with two parameters $\theta=(a,b)$, such that its output is $h_\theta(\rvx_i)=\sigma(a x_i + b x_i^2)$, where $\sigma(z)=1/(1+e^{-(1+z)/2})$. 

We choose 4 samples in the configuration: $\cD=\{\rvx_1,\rvx_2,\rvx_3,\rvx_4\}=\{(-1,1),(1,1),(3,9),(4,16)\}$, with labels $\{y_1,y_2,y_3,y_4\}=\{-1,1,-1,1\}$, respectively. 
In order to model the effect of multiple points clustered together, we give each point a different weight in the loss function, such that 
\begin{align}
    \mathcal{L}
    =
    \frac{1}{|\cD|}
    \sum_{i=1}^{\mathop{|\cD|}}
    \alpha_i \ell_i + \frac{\lambda}{2}\|\theta\|_2^2,
\end{align}   
where $\ell_i$ are the single sample loss functions, and $\{\alpha_1,\alpha_2,\alpha_3,\alpha_4 \}=\{5,4,1,4\}$ represent the number of points clustered together, as illustrated in \cref{fig:DB}, where $\lambda = 0.1$. This means that the effective number of points that the classifier sees is $\sum_i \alpha_i$.
The data configuration is chosen to illustrate the failure mode of DA, while the dataset selection is arbitrary the key mode of failure is the high correlation between the forget set and a subset of the retain.

Suppose we would like to unlearn two of the positive samples positioned at $\rvx_4$. Retraining would correspond to simply setting $\alpha_4=2$, and applying gradient descent. Notice that this provides little to no change for the minima location and the contour lines between the original dataset $\cD$ and the retraining set $\cR$. In contrast, Performing GDA would amount to setting $\alpha_4=0$, since two points will contribute the exact opposite gradient as the other two at the same position, effectively erasing them.

We 
find that this example can be simply understood by counting arguments: since the original dataset contains effectively 6 negative samples and 8 positive samples, the optimal decision boundary is given by the separating plane which correctly classifies the largest number of samples.

The pretrained model is optimal when $\rvx_1,\rvx_2$ and $\rvx_4$ are correctly classified, while mislabeling $\rvx_3$ (13 correct, 1 incorrect). Retraining simply reduces the weight of $\rvx_4$, and keeping the same plane is still preferential (11 correct, 1 incorrect). However, performing GDA sets the gradients of half of the points at $\rvx_4$ to cancel the other half, so it optimal to re-orient the decision boundary so that all samples are correctly classified (10 correct, 0 incorrect), while in reality, the algorithm has been tricked into finding a suboptimal solution (10 correct, 2 incorrect). 

The qualitative analysis of this two-dimensional example shows that certain choices of forget sets that are highly correlated to the retain can lead to irreversible model degradation when using DA.

\section{Conclusions}
\label{sec:conclusions}

While our findings highlight significant challenges in current ascent-based unlearning methods, we believe that they are instructive for the construction of safer future methods.
The weaknesses we identify primarily stem from ascent disregarding the data dependencies between the forget and the retain set.
Future research on ascent based methods should take these dependencies into consideration.
Our findings also suggests that methods based on rewinding \cite{mu2024rewind} or stochastic methods based on noise \cite{chien2024langevin} can be valid alternative schemes when being agnostic on the dataset properties.

\section*{Acknowledgements}
\label{sec:acks}

We would like to thank Roy Rinberg and Gal Vardi for insightful discussions.

This work was supported by Hasler Foundation Program: Hasler Responsible AI (project number 21043). Research was sponsored by the Army Research Office and was accomplished under Grant Number W911NF-24-1-0048. This work was funded by the Swiss National Science Foundation (SNSF) under grant number 200021\_205011. Any opinions, findings, and conclusions or recommendations expressed in this material are those of the authors and do not necessarily reflect the views of the sponsors.

\newpage

\bibliography{bib}
\bibliographystyle{unsrtnat}

\newpage

\appendix
\onecolumn

\section{Limitations}

\paragraph{Limitations:} 

A key limitation in our theoretical results is their simplicity, both in the model analyzed as well as the methods, for which the analysis is done.
The proof does not explicitly prohibit more complex models or methods from resolving this issue. 
We believe, however, that as far as models go simpler models could be nested in more complex ones, leading to this detrimental phenomenon. 
For developing more complex methods based on ascent we believe that still someone has to take correlations into consideration, given our findings.
Extending our theoretical results to multi-layer networks would be valuable but is technically challenging.
Quantifying which correlations are most harmful remains open. We propose using an influence function cross influence matrix between forget and retain sets as a tractable proxy, and testing whether \GAD{} unlearning degrades as its spectral norm increases.

\section{Discussion}
\label{app:discussion}

\noindent\textbf{Q: Do training dynamics (e.g., learning rate and regularization) affect the locations of the optimal minimizers and the resulting decision boundaries in \cref{fig:DB} and \cref{sec:low_d}?}\\
\textbf{A:} No. Our statements concern the optimization landscape of a fixed objective and the set of its minimizers. Training dynamics (including learning rate and optimizer trajectory) affect the path and speed of convergence, not which points are minimizers, as long as the objective is unchanged.
Adjusting the ridge parameter shifts minimizers radially (toward/away from the origin) without altering their relative positions; hence the decision boundaries in \cref{fig:DB} and the conclusions in \cref{sec:low_d} are unaffected. That said, using different learning rates for the retain and forget sets (i.e., the ascent and descent steps on DA) effectively changes the objective itself. In this case, the new objective becomes implicitly weighted by the ratio of those learning rates. While one could tune the weighting so that the particular case in Figure 5 no longer appears problematic, a similar counterexample can always be constructed for any ratio. For instance, if the forget set is updated with twice the step size of the retain set, then in the setting of Figure 5 one can simply reduce the forget set from two samples to one (by removing one of the original samples). This modification recreates the same optimization landscape we described earlier.

\noindent\textbf{Q: What results do you obtain when working with highly correlated datasets such as MNIST or FashionMNIST?}\\
\textbf{A:} MNIST \citep{lecun1998mnist} is degenerate for evaluating unlearning: many points are near-duplicates, so pretrained and oracle models remain nearly indistinguishable on forget points, yielding uniformly low \KLOM{} across splits even when the forget set is 10\% of training.
In contrast to CIFAR-10, where oracles and pretrained models diverge on forget sets (consistent with \citet{georgiev2024attributetodeletemachineunlearningdatamodel}), MNIST shows little separability. While FashionMNIST \citep{xiao2017fashionmnist} exhibits a modest increase in forget-set \KLOM{} it is still far below CIFAR-10 magnitudes.
As a result, any method may appear to succeed on MNIST by doing very little (for example, tiny steps), making true forgetting difficult to verify.

\begin{table}[htbp]
\centering
\small
\begin{tabular}{lcccc}
\toprule
Dataset & Forget \% & Forget \KLOM{} (95th) & Retain \KLOM{} (95th) & Val \KLOM{} (95th) \\
\midrule
MNIST & 0.02\% & 0.5 & 0.7 & 0.71 \\
MNIST & 0.2\% & 2.72 & 2.88 & 2.9 \\
MNIST & 10\% & 1.79 & 1.65 & 1.71 \\
FashionMNIST & 0.02\% & 4.13 & 1.87 & 2.76 \\
FashionMNIST & 0.2\% & 2.72 & 1.79 & 2.70 \\
FashionMNIST & 10\% & 2.69 & 2.0 & 3.31 \\
\bottomrule
\end{tabular}
\vspace{0.5em}
\caption{95th percentile \KLOM{} comparing pretrained and oracle models on MNIST and FashionMNIST across forget-set sizes. Averages over 100 pretrained models per dataset and 100 oracles per forget set. \textbf{MNIST and FashionMNIST are highly correlated and therefore degenerate for sound machine unlearning evaluation} with pretrained and oracle models remaining too similar on forget points, which can inflate apparent success.}
\label{tab:mnist_fashionmnist_klom}
\end{table}

\noindent\textbf{Q: Why does \GAD{} unlearning degrade more when the forget set is aligned with the top principal components (1st/2nd PCs) compared to random forget sets?}\\
\textbf{A:} Intuitively, aligning the forget set with the top principal components (PCs) concentrates it along directions where the model is most sensitive, so ascent steps taken to degrade performance on that set inevitably interfere with the decision boundary more globally. A simple 2\,-D logistic regression toy helps illustrate this. Let labels be generated by a fixed teacher vector \(T\in\mathbb{R}^2\), e.g., \(T = (1,-1)\), which classifies points by the sign of \(x_1-x_2\). Train a student with weights \(S\in\mathbb{R}^2\) on these labels; successful learning yields \(S\approx T\). If points are sampled near the origin, the (pointwise) influence of a training point \(z=(x,y)\) on a test point \(z_{\text{test}}=(x_{\text{test}},y_{\text{test}})\) can be approximated (up to an inverse Hessian factor that is proportional to the identity) by
$$
I_{z, z_{\text{test}}}\;\approx\;-\,y_{\text{test}}y\,\sigma\!\big(-y_{\text{test}}\,S^\top x_{\text{test}}\big)\,\sigma\!\big(-y\,S^\top x\big)\,x_{\text{test}}^\top x,
$$
which, for \(S\approx(1,-1)\), reduces to
$$
I_{z, z_{\text{test}}}\;\approx\;-\,(x_{\text{test},1}-x_{\text{test},2})(x_1-x_2)\,\sigma\!\big(-(x_{\text{test},1}-x_{\text{test},2})^2\big)\,\sigma\!\big(-(x_1-x_2)^2\big)\,x_{\text{test}}^\top x.
$$
Along lines of roughly constant \(x_1-x_2\), the correlation between \(x\) and \(x_{\text{test}}\) controls influence magnitude, so the leading PCs of \(x_{\text{test}}^\top x\) align with the leading directions of the influence matrix. Selecting the forget set along the top PCs thus targets directions that most strongly affect predictions, causing ascent to push the model in ways that globally perturb the decision boundary. In contrast, random forget sets spread mass across many weaker directions, so ascent tends to be less coherent and, on average, less damaging. We stress this is an intuition; analyzing full influence matrices in tractable yet realistic settings would be valuable future work.

\noindent\textbf{On prior empirical observations and our contributions.}
\citet{georgiev2024attributetodeletemachineunlearningdatamodel} empirically show that gradient ascent–based unlearning can perform poorly. For instance, low stability and different points unlearning at different rates and ascent diverging for linear models (Figs. 5 and 11 in \cite{georgiev2024attributetodeletemachineunlearningdatamodel}).
We advance these observations by identifying inter-set correlation between forget and retain data as the causal mechanism and by providing the missing evidence: controlled experiments that disentangle random forget sets (with unstructured dependence) from structurally correlated sets aligned with top principal components of the influence matrix, and a theory for non-linear models showing immediate detriment of DA regardless of early stopping.
Empirically, we find DA to be unstable on random sets sometimes improving the forget metric but often breaking the model. On structurally correlated sets, failure is systematic and severe (\cref{fig:ascent_fails_corr,fig:ascent_fails_text}).
We also highlight the evaluation pitfall where selecting the best run over large hyperparameter grids can create a false appearance of success (the “Ascent Forgets Illusion,” \cref{fig:ascent_needs_stopping}).
Theoretically, in logistic regression we show that DA moves away from the oracle from the very first steps: \cref{lemm:logidivergence} proves that DA updates point in the opposite direction to the oracle, and \cref{lemm:distanceunlearning} shows the DA solution can remain far from the oracle even when the retrained solution is close to the pretrained one.
Together, these results explain \emph{why} ascent–descent updates degrade performance in correlated regimes and \emph{when} such degradation is unavoidable.

\paragraph{Practical guidance and evaluation checklist.}\label{sec:practical_checklist}
We recommend two simple diagnostics for unlearning experimentation: (i) measure the oracle–pretrained \KLOM{} gap on forget versus retain/validation splits and (ii) probe sensitivity to early stopping and step size, which can create the “Ascent Forgets Illusion”.

\section{Proof of Lemma for Random Sets}\label{app:randomsets}

In this section we provide proof that for a forget set, selected uniformly at random from the dataset it is with high probability impossible to differentiate the accuracy, loss, or any other metric between the test and the forget set, given that both of them are large enough. In this section we provide the proof for the accuracy metric, but for other metrics the proof follows in like manner. Intuitively this stems from the fact that for a model which has "unlearned" a forget set, that set is a random set for it.

We will use the following notation.
Let $\ind_{(\cdot,\cdot)}: \bR\times\bR \rightarrow \{0,1\}$ such that $\ind_{(x,x)}=1$, otherwise for $x \neq y, \ind_{(x,y)}=0$. We will denote our model with parameters $\theta$ as $h_{\theta} : \bR^d \rightarrow \bR$.

\randomsets*
\begin{proof}%
    For each sample $(x_i,y_i)$, we calculate the correct response on that sample, as $\ind_{(h_\theta(x_i),y_i)}$, consequently the response of the model for any sample is an independent rendom variable. So we get the following random variables, which correspond to the accuracy of the model on the forget set $\cF$ and the test set $\cT$ respectively.
    \[\acc_{\cT} = \bE_{(x_i,y_i)\sim P_\cT}\left[\ind_{(h_\theta(x_i),y_i)}\right]\]
    \[\acc_{\cF} = \frac{1}{|\cF|}\sum_{(x_i,y_i)\in \cF}\ind_{(h_\theta(x_i),y_i)}\]
    In order to proceed we will utilize Hoeffding's Inequality, which we state below for completeness:
    \begin{lemma}
        Let \( Z_1, Z_2, \dots, Z_n \) be independent random variables such that \( Z_i \in [a_i, b_i] \). Define their sum as:

        \[
        S_n = \sum_{i=1}^{n} Z_i
        \]

        and let \( \mathbb{E}[S_n] \) be the expected value of \( S_n \). Then, for any \( t > 0 \), the following bound holds:
            \[
        P\left( |S_n - \mathbb{E}[S_n]| \geq nt \right) \leq 2 \exp\left( \frac{-2n^2 t^2}{\sum_{i=1}^{n} (b_i - a_i)^2} \right)
        \]
    \end{lemma}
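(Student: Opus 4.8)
The plan is to prove this concentration bound via the classical Chernoff–Hoeffding argument, which converts a tail probability into a moment generating function (MGF) bound that is then optimized over a free parameter. First I would treat the upper tail $P(S_n - \mathbb{E}[S_n] \geq nt)$ on its own. For any $s > 0$, Markov's inequality applied to the nonnegative random variable $e^{s(S_n - \mathbb{E}[S_n])}$ gives
\[
P\left(S_n - \mathbb{E}[S_n] \geq nt\right) \leq e^{-snt}\,\mathbb{E}\!\left[e^{s(S_n - \mathbb{E}[S_n])}\right].
\]
Since the $Z_i$ are independent, the MGF factorizes as $\mathbb{E}[e^{s(S_n - \mathbb{E}[S_n])}] = \prod_{i=1}^n \mathbb{E}[e^{s(Z_i - \mathbb{E}[Z_i])}]$, reducing the task to bounding the MGF of a single centered, bounded variable (note that centering preserves the interval width $b_i - a_i$).

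The crux of the argument, and the step I expect to be the main obstacle, is the auxiliary MGF estimate (Hoeffding's Lemma): for a random variable $X \in [a,b]$ with $\mathbb{E}[X]=0$, one has $\mathbb{E}[e^{sX}] \leq \exp\!\big(s^2(b-a)^2/8\big)$. I would establish this by exploiting convexity of $z \mapsto e^{sz}$: for $x \in [a,b]$, writing $x$ as a convex combination of the endpoints yields $e^{sx} \leq \frac{b-x}{b-a}e^{sa} + \frac{x-a}{b-a}e^{sb}$; taking expectations and using $\mathbb{E}[X]=0$ eliminates the linear term. Setting $p = -a/(b-a)$ and $u = s(b-a)$, the resulting upper bound equals $e^{\phi(u)}$ with $\phi(u) = -pu + \ln(1-p+pe^u)$. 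A short computation shows $\phi(0)=0$, $\phi'(0)=0$, and $\phi''(u) = \frac{p(1-p)e^u}{(1-p+pe^u)^2} \leq \tfrac14$, the last bound following because $\phi''(u)$ has the form $q(1-q)$ for $q \in [0,1]$; Taylor's theorem with remainder then gives $\phi(u) \leq u^2/8 = s^2(b-a)^2/8$.

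With the lemma in hand, substituting it into the factorized MGF yields
\[
P\left(S_n - \mathbb{E}[S_n] \geq nt\right) \leq \exp\!\left(-snt + \frac{s^2}{8}\sum_{i=1}^n (b_i-a_i)^2\right).
\]
The exponent is quadratic in $s$ and is minimized at $s^\star = 4nt / \sum_{i=1}^n (b_i-a_i)^2$; substituting $s^\star$ produces the one-sided bound $\exp\!\big(-2n^2t^2 / \sum_{i=1}^n (b_i-a_i)^2\big)$. Finally, applying the identical argument to the variables $-Z_i$ controls the lower tail $P(\mathbb{E}[S_n] - S_n \geq nt)$ with the same estimate, and a union bound over the two tails supplies the factor of $2$ and the stated two-sided inequality. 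The remaining pieces (Markov's inequality, the independence factorization, and the scalar optimization over $s$) are routine; only Hoeffding's Lemma carries the genuine technical content.
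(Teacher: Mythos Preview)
Your proof is correct and complete: the Chernoff bound via Markov's inequality, the factorization by independence, Hoeffding's Lemma established through the convexity-plus-Taylor argument, the optimization over $s$, and the final union bound together give exactly the stated two-sided inequality. The paper, however, does not prove this lemma at all; it merely states Hoeffding's inequality ``for completeness'' as a known tool inside the proof of Lemma~\ref{lemm:randomsets_main} and then applies it. So you have supplied a full classical derivation where the paper simply invokes the result, and there is nothing further to compare.
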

    In our case we have that $\frac{1}{n}S_n = \acc_\cF$. Since the Forget set $\cF$ is selected uniformly at random, we have that:
    \begin{eqnarray*}
        \bE\left[\acc_\cF\right] &=& \bE\left[\frac{1}{|\cF|}\sum_{(x_i,y_i)\in \cF}\ind_{(h_\theta(x_i),y_i)}\right]\\
        &=&\frac{1}{|\cF|}\sum_{(x_i,y_i)\in \cF}\bE_{(x_i,y_i)\sim P_\cT}\left[\ind_{(h_\theta(x_i),y_i)}\right]\\
        &=& \bE_{(x_i,y_i)\sim P_\cT}\left[\ind_{(h_\theta(x_i),y_i)}\right]\\
        &=& \acc_\cT
    \end{eqnarray*}
    
    Since the random variables $\ind_{(h_\theta(x_i),y_i)} \in [0,1]$, we have that:
    \[P\left(|\acc_{\cT}-\acc_{\cF}|\geq \epsilon\right)\leq 2 exp\left(-2|\cF|\epsilon^2\right)\]
    which gives the lemma statement.
\end{proof}

The above lemma gives a formal statement, as to why maximizing the error on random forget sets does not correspond to true unlearning, since the metrics in the forget set should match those in the test set. We consider no distribution shift between the train, test and true distribution $P_\cT$.

\section{Logistic Regression}

\subsection{Problem Statement}\label{app:probstatement}

The logistic regression problem for the full dataset $\cD$, retain set $\cR$ and for the Descent-Ascent algorithm can be restated as:

\begin{equation}\label{eq:minfunctions}
\begin{aligned}
    \textbf{minimization } \cD \textbf{ : }& \frac{1}{|\cD|}\sum_{i=1}^{\cD} e^{-y_i \cdot \langle w, x_i \rangle} +\frac{\lambda}{2} \|w\|_2^2\\
    \textbf{minimization } \cR \textbf{ : } & \frac{1}{|\cR|}\sum_{i=1}^{\cR} e^{-y_i \cdot \langle w, x_i \rangle} +\frac{\lambda}{2} \|w\|_2^2\\
    \textbf{Descent } \cR-\textbf{Ascent } \cF \textbf{ : } & \frac{1}{|\cR|}\sum_{i=1}^{\cR} e^{-y_i \cdot \langle w, x_i \rangle} - \frac{1}{|\cF|}\sum_{i=1}^{\cF} e^{-y_i \cdot \langle w, x_i \rangle}+\frac{\lambda}{2} \|w\|_2^2
\end{aligned}
\end{equation}

\subsection{Single Dimension}\label{app:singledim}

In this section, we compare the solutions of training a logistic regression model on a full dataset $\cD$, purely on the retain set $\cR$ and doing GDA on the forget set $\cF$. We will also include a regularization term. The corresponding objective functions would be:

We can derivate the above to get the following equations for their solutions respectively.

\begin{eqnarray*}
    (\text{minimization } \cD) & \frac{1}{|\cD|}\sum_{i=1}^{\cD}-y_i \cdot x_i e^{-y_i\cdot \langle w,x_i \rangle} +\lambda w = 0\\
    (\text{minimization } \cR) & \frac{1}{|\cR|}\sum_{i=1}^{\cR}-y_i\cdot x_i e^{-y_i\cdot \langle w,x_i \rangle} + \lambda w = 0\\
    (\text{Descent } \cR-\text{Ascent } \cF) & \frac{1}{|\cR|}\sum_{i=1}^{\cR}-y_i\cdot x_i e^{-y_i\cdot \langle w,x_i \rangle} - \frac{1}{|\cF|}\sum_{i=1}^{\cF}-y_i\cdot x_i e^{-y_i\cdot \langle w,x_i \rangle} + \lambda w = 0
\end{eqnarray*}

So we can express each coordinate $j$ of the minimizer for the three cases, as:

\begin{eqnarray*}
    (\text{minimization } \cD) & w_j = \frac{1}{\lambda|\cD|}(\sum_{i=1}^{\cD} y_i \cdot x_{i,j} e^{-y_i\cdot \langle w,x_i \rangle})\\
    (\text{minimization } \cR) & w_j = \frac{1}{\lambda|\cR|}(\sum_{i=1}^{\cR} y_i\cdot x_{i,j} e^{-y_i\cdot \langle w,x_i \rangle})\\
    (\text{Descent } \cR-\text{Ascent } \cF) & w_j = \frac{1}{\lambda|\cR|} (\sum_{i=1}^{\cR} y_i\cdot x_{i,j} e^{-y_i\cdot \langle w,x_i \rangle}) - \frac{1}{\lambda|\cF|}(\sum_{i=1}^{\cF} y_i\cdot x_{i,j} e^{-y_i\cdot \langle w,x_i \rangle})
\end{eqnarray*}

\subsection{Iterating Gradient Descent and Ascent}\label{app:IterGA}

Here, we consider the iterative gradient descent-ascent algorithm, where we first perform a gradient descent step on the retain set, followed by a gradient ascent step on the forget set. We show that to leading order in the small learning rate expansion, the solution found by iterative GA is identical to the one given by GA in \cref{eq:minfunctions}.
For iterative GA, the dynamics are given by
\begin{align}
    & w_j^{t+1} = w_j^t + \eta \left(\frac{|\cR_j|}{|\cR|} e^{- w_j^t } - \lambda w_j^t \right),
    \\ \nonumber
    & w_j^{t+2} = w_j^{t+1}
    - \eta \left(
    \frac{\epsilon \cdot |\cR_j|}{|\cF|} e^{- w_j^{t+1} }
    +\lambda w_j^{t+1}
    \right),
\end{align}
where $\eta$ is the learning rate for both steps.
Plugging in the result of $w_j^{t+1}$ into the expression for $w_j^{t+2}$ and expanding for small $\eta \ll1$, we obtain the following update rule
\begin{align}\label{eq:iter_GA}
    w_j^{t+2} 
    &=
    w_j^t + \eta \left(\frac{|\cR_j|}{|\cR|} e^{- w_j^t } - \lambda w_j^t \right)
    \\ \nonumber
    &- \eta \left(
    \frac{\epsilon \cdot |\cR_j|}{|\cF|} 
    e^{- 
    \left( 
     w_j^t + \eta \left(\frac{|\cR_j|}{|\cR|} e^{- w_j^t } - \lambda w_j^t \right)
    \right)
    }
    +
    \lambda 
    \left( 
     w_j^t + \eta \left(\frac{|\cR_j|}{|\cR|} e^{- w_j^t } - \lambda w_j^t \right)
    \right)
    \right)
    \\ \nonumber
    &\simeq
     w_j^t + \eta \left(\frac{|\cR_j|}{|\cR|} e^{- w_j^t } - 2\lambda w_j^t \right)
    - \eta \left(
    \frac{\epsilon \cdot |\cR_j|}{|\cF|} 
    e^{- 
    \left( 
     w_j^t + \eta \left(\frac{|\cR_j|}{|\cR|} e^{- w_j^t } - \lambda w_j^t \right)
    \right)
    }
    \right)
    \\ \nonumber
    &\simeq
     w_j^t + \eta \left(\frac{|\cR_j|}{|\cR|} e^{- w_j^t } -2 \lambda w_j^t \right)
    - \eta \left(
    \frac{\epsilon \cdot |\cR_j|}{|\cF|} 
    e^{- 
     w_j^t }
     \left(
     1
     -\eta \left(\frac{|\cR_j|}{|\cR|} e^{- w_j^t } -\lambda w_j^t \right)
     \right)
    \right)
    \\ \nonumber
    &=
    w_j^t - \eta \left(- \frac{|\cR_j|}{|\cR|} e^{- w_j^t } 
    + \frac{\epsilon \cdot |\cR_j|}{|\cF|} e^{- 
     w_j^t } 
     +2 \lambda w_j^t \right) + \mathcal{O}(\eta^2).
\end{align}
\cref{eq:iter_GA} shows that up to order $\mathcal{O}(\eta^2)$, the dynamics, as well as the convergent solution of the iterative descent-ascent algorithm are identical to the ones obtained from \cref{eq:minfunctions}, up to a rescaling of the regularization parameter by a factor of 2, as in $\lambda_\mathrm{DA} = 2 \lambda_\mathrm{Iter-DA}$.

\section{Proof of Lemma~\ref{lemm:closedform}}\label{app:proofclosedform}

In this section we prove Lemma~\ref{lemm:closedform} under Assumption~\ref{ass:semi-orthogonal} and Assumption~\ref{ass:magnitude}.
\closedform*
\begin{proof}
    Let us start by restating the original problem as given in \cref{eq:minfunctions}. For the sake of completeness.
    \begin{equation*}
    \begin{aligned}
    &\textbf{minimization } \cD \textbf{ : } \frac{1}{|\cD|}\sum_{i=1}^{\cD} e^{-y_i \cdot \langle w, x_i \rangle} +\frac{\lambda}{2} \|w\|_2^2\\
    &\textbf{minimization } \cR \textbf{ : }  \frac{1}{|\cR|}\sum_{i=1}^{\cR} e^{-y_i \cdot \langle w, x_i \rangle} +\frac{\lambda}{2} \|w\|_2^2\\
    &\textbf{Descent } \cR-\textbf{Ascent } \cF \textbf{ : }  \frac{1}{|\cR|}\sum_{i=1}^{\cR} e^{-y_i \cdot \langle w, x_i \rangle} - \frac{1}{|\cF|}\sum_{i=1}^{\cF} e^{-y_i \cdot \langle w, x_i \rangle}+\frac{\lambda}{2} \|w\|_2^2
    \end{aligned}
    \end{equation*}
    We can get the local minima of these functions by using Fermat's theorem, therefore we have:
    \begin{equation*}
    \begin{aligned}
    &\textbf{minimization } \cD \textbf{ : } \frac{1}{|\cD|}\sum_{i=1}^{\cD}-y_i \cdot x_i e^{-y_i\cdot \langle w,x_i \rangle} +\lambda w = 0\\
    &\textbf{minimization } \cR \textbf{ : } \frac{1}{|\cR|}\sum_{i=1}^{\cR}-y_i\cdot x_i e^{-y_i\cdot \langle w,x_i \rangle} + \lambda w = 0\\
    &\textbf{Descent } \cR-\textbf{Ascent } \cF \textbf{ : } \frac{1}{|\cR|}\sum_{i=1}^{\cR}-y_i\cdot x_i e^{-y_i\cdot \langle w,x_i \rangle} - \frac{1}{|\cF|}\sum_{i=1}^{\cF}-y_i\cdot x_i e^{-y_i\cdot \langle w,x_i \rangle} + \lambda w = 0
    \end{aligned}
    \end{equation*}
    Solving the equations for coordinate $j$ and using Assumption~\ref{ass:semi-orthogonal}, we get:
    \begin{equation*}
    \begin{aligned}
    &\textbf{minimization } \cD \textbf{ : } w_j = \frac{1}{\lambda|\cD|}(\sum_{i=1}^{S_j} y_i \cdot x_{i,j} e^{-y_i\cdot w_j \cdot x_{i,j}})\\
    &\textbf{minimization } \cR \textbf{ : } w_j = \frac{1}{\lambda|\cR|}(\sum_{i=1}^{\cR_j} y_i\cdot x_{i,j} e^{-y_i\cdot w_j \cdot x_{i,j}})\\
    &\textbf{Descent } \cR-\textbf{Ascent } \cF \textbf{ : } w_j = \frac{1}{\lambda|\cR|} (\sum_{i=1}^{\cR_j} y_i\cdot x_{i,j} e^{-y_i\cdot w_j \cdot x_{i,j}}) - \frac{1}{\lambda|\cF|}(\sum_{i=1}^{\cF_j} y_i\cdot x_{i,j} e^{-y_i\cdot w_j \cdot x_{i,j} })
    \end{aligned}
    \end{equation*}
    Now we can utilize Assumption~\ref{ass:magnitude} and the fact that:
    $|\cF_j| = \alpha \cdot |\cR_j|$ to restate the previous equations in the form:
    \begin{equation*}
    \begin{aligned}
    &\textbf{minimization } \cD \textbf{ : } w_j = \frac{(1+\alpha)|\cR_j|}{\lambda|\cD|}e^{- w_j}\\
    &\textbf{minimization } \cR \textbf{ : } w_j = \frac{|\cR_j|}{\lambda|\cR|} e^{- w_j } \\
     &\textbf{Descent } \cR-\textbf{Ascent } \cF \textbf{ : } w_j = \frac{|\cR_j|}{\lambda|\cR|} e^{- w_j } - \frac{\alpha \cdot |\cR_j|}{\lambda|\cF|} e^{- w_j }
    \end{aligned}
    \end{equation*}
    As explained in \cref{app:lambertfunction} the Lambert function $\rW$ provides the solution for equations of the previous form. Using this fact we get:
    \begin{equation*}
    \begin{aligned}
    &\textbf{minimization } \cD \textbf{ : } w_j^{\cD} = \rW\left(\frac{(1+\alpha)|\cR_j|}{\lambda|\cD|}\right)\\
    &\textbf{minimization } \cR \textbf{ : } w_j^{\cR} = \rW\left(\frac{|\cR_j|}{\lambda|\cR|}\right) \\
    &\textbf{Descent } \cR-\textbf{Ascent } \cF \textbf{ : } w_j^{\da} = \rW\left(\frac{(1-\alpha|\cR|/|\cF|)|R_j|}{\lambda |R|}\right)
    \end{aligned}
    \end{equation*}
    This concludes the proof.
\end{proof}

\subsection{The Lambert function \(\rW\)}\label{app:lambertfunction}

In this section for the sake of exposition we briefly discuss the Lambert function $\rW$. Introduced by Johann Heinrich Lambert in 1758. In this work we are primarily interested in the property of the function that for any $\alpha$, the solution of the equation:
\[x-\alpha\cdot e^{-x} = 0\]
is $x = \rW\left(-a\right)$. As well as the monotonicity of the principal branch of the Lambert function.

\section{Proof of Lemma~\ref{lemm:logidivergence}}\label{app:logidivergence}

In this section of the appendix we provide the proof for Lemma~\ref{lemm:logidivergence}, under Assumptions~\ref{ass:semi-orthogonal} and ~\ref{ass:magnitude}, we start by restating the Lemma below for the sake of exposition. 
\divlogiregress*
\begin{proof}
    To begin the proof let us restate the three minimization problems for logistic regression for the three cases, whose respective solutions are $w_j^\cD,w_j^\cR,w_j^\da$
    \begin{equation*}
    \begin{aligned}
    \textbf{minimization } \cD \textbf{ : }& \frac{1}{|\cD|}\sum_{i=1}^{\cD} e^{-y_i \cdot \langle w, x_i \rangle} +\frac{\lambda}{2} \|w\|_2^2\\
    \textbf{minimization } \cR \textbf{ : } & \frac{1}{|\cR|}\sum_{i=1}^{\cR} e^{-y_i \cdot \langle w, x_i \rangle} +\frac{\lambda}{2} \|w\|_2^2\\
    \textbf{Descent } \cR-\textbf{Ascent } \cF \textbf{ : } & \frac{1}{|\cR|}\sum_{i=1}^{\cR} e^{-y_i \cdot \langle w, x_i \rangle} - \frac{1}{|\cF|}\sum_{i=1}^{\cF} e^{-y_i \cdot \langle w, x_i \rangle}+\frac{\lambda}{2} \|w\|_2^2
    \end{aligned}
    \end{equation*}
    So the local minima and maxima of these equations can be characterized with the help of Lemma~\ref{lemm:closedform}, the proof of which can be found in \cref{app:proofclosedform}, for the sake of completeness, let us restate the lemma here
    \closedform*
    Since $\alpha \geq 0$, we have that:
    $$\frac{(1+\alpha)|\cR_j|}{\lambda|\cD|}>0 \text{ and} \frac{|\cR_j|}{\lambda|\cR|}>0$$
    The minimization for logistic regression over the original dataset $\cD$ and the retrain dataset $\cR$ both have a global minimum that is unique and corresponds to the solution of the principal branch of the Lambert function $W_0$, for that value.\\
    For the Descent Ascent solution, since the input of the Lambert function is not necessarily positive, we have to separate our analysis to three cases:
    \begin{enumerate}
        \item The first case, where there is only one global minimum, meaning that the input $x$ of the Lambert function is $x\geq 0$. Equivalently, we have $\frac{(1-\alpha|\cR|/|\cF|)|R_j|}{\lambda |R|} \geq 0$ which implies that $\alpha \leq \frac{|\cF|}{|\cR|}$
        \item The second case, where we have a solution both for the primary and the secondary branch of the Lambert function, corresponding to a local maximum and minimum respectively meaning that you have that the input $x$ of the Lambert function is $-1/e \leq x \leq 0$, equivalently solving for $\epsilon$ gives $|\cF|/|\cR| < \alpha \leq |\cF|/|\cR|+(\lambda|\cF|)/(e|\cR_j|)$
        \item The third case, where there are no local minima, meaning that the input of the Lambert function $x$ is $x < -1/e$, which implies that $\alpha > |\cF|/|\cR|+(\lambda|\cF|)/(e|\cR_j|)$
    \end{enumerate}
    \textbf{\underline{Case 1:}} In case 1 we have that $\alpha \leq \frac{|\cF|}{|\cR|}$, which implies that:
    \[\frac{(1+\alpha)|\cR_j|}{\lambda |\cD|}\leq \frac{(|\cR|+|\cF|)|\cR_j|}{\lambda |\cR||\cD|}\leq \frac{|\cD||\cR_j|}{\lambda|\cR||\cD|} \leq \frac{|\cR_j|}{\lambda|\cR|}\]
    so since the principal branch $\rW_0$ of the Lambert function is increasing, we have that:
    \[w_j^\cD = \rW_0\left(\frac{(1+\alpha)|\cR_j|}{\lambda|\cD|}\right) \leq  \rW_0\left(\frac{|\cR_j|}{\lambda |\cR|}\right) = w_j^\cR\]
    For this case, let us now assume that $\alpha \geq |\cF|^2/\left(|\cR|(|\cF|+|\cD|)\right)$, it is easy to verify that for such an $\alpha$ it holds that: \(\frac{(1+\alpha)|\cR_j|}{\lambda|\cD|} \geq  \frac{(1-\alpha|\cR|/|\cF|)|R_j|}{\lambda |R|}\), so we have that:
    \[w_j^\da = \rW_0\left(\frac{(1-\alpha|\cR|/|\cF|)|R_j|}{\lambda |R|}\right) \leq \rW_0\left(\frac{(1+\alpha)|\cR_j|}{\lambda|\cD|}\right) = w_j^\cD\]
    So for \textbf{Case 1} we have that $w_j^\da \leq w_j^\cD \leq w_j^\cR$, which implies that $(w_j^\da-w_j^\cD)\cdot(w_j^\cD-w_j^\cR) \geq 0$

    This concludes the proof.

\end{proof}

\section{Proof of Lemma~\ref{lemm:distancegrowth}}\label{app:distancegrowth}

In this section we provide the proof for Lemma~\ref{lemm:distancegrowth} under Assumptions ~\ref{ass:semi-orthogonal} and \ref{ass:magnitude}.

\distancegrowth*

\begin{proof}
    We start from Lemma~\ref{lemm:closedform}, which we restate below for the sake of exposition.
    \closedform*
    Since the input of the Lambert function for $w_j^\cD, w_j^\cR$ is always positive these solutions correspond to the only minimum of the function for the minimization problem and additionally they are calculated from them principal branch of the Lambert function $\rW_0$.
    We start from the logarithmic connection of the Lambert function, which is that for any value of $x$ it holds that:
    \[\rW(x) = \ln(x)-ln(\rW(x))\]
    So for $\alpha \geq \frac{|\cF|}{|\cR|}$,since $\rW_0$ is increasing we have that $w_j^\cD\geq w_j^\cR$ we have the following:
    \begin{eqnarray*}
        \Delta_{\cR,\cD} &=& w_j^\cD - w_j^\cR\\
        &=& \rW_0\left(\frac{(1+\alpha)|\cR_j|}{\lambda|\cD|}\right)-\rW_0\left(\frac{|\cR_j|}{\lambda|\cR|}\right)\\
        &=&\rW_0\left(\alpha \frac{|\cR_j|}{\lambda|\cR|}\right)-\rW_0\left(\frac{|\cR_j|}{\lambda|\cR|}\right)~~~\text{ ,where } \alpha = (1+\alpha)\frac{|\cR|}{|\cD|}\\
        &=& \ln\left(\alpha\frac{|\cR_j|}{\lambda|\cR|}\right)-\ln\left(\rW_0\left(\alpha \frac{|\cR_j|}{\lambda|\cR|}\right)\right)-\ln\left(\frac{|\cR_j|}{\lambda|\cR|}\right)+\ln\left(\rW_0\left(\frac{|\cR_j|}{\lambda|\cR|}\right)\right)\\
        &=& \ln\left(\alpha\frac{|\cR_j|}{\lambda|\cR|}\right)-\ln\left(\rW_0\left(\alpha \frac{|\cR_j|}{\lambda|\cR|}\right)\right)-\ln\left(\frac{|\cR_j|}{\lambda|\cR|}\right)+\ln\left(\rW_0\left(\frac{|\cR_j|}{\lambda|\cR|}\right)\right)\\
        &=& \ln\left(\alpha\right) - \ln\left(\frac{\rW_0\left(\alpha \frac{|\cR_j|}{\lambda|\cR|}\right)}{\rW_0\left(\frac{|\cR_j|}{\lambda|\cR|}\right)}\right)\\
        &\leq& \ln\left(\alpha\right) ~~~ \text{ , since the principal branch $\rW_0$ is increasing} 
    \end{eqnarray*}
    We can repeat the same proof procedure for $\alpha\leq \frac{|\cF|}{|\cR|}$, but instead we get $\Delta_{\cR,\cD} \leq -\ln(\alpha)$. This concludes the proof
\end{proof}

\section{Proof of Lemma~\ref{lemm:distanceunlearning}}\label{app:distanceDA}

\distanceunlearning*
\begin{proof}
    We start from Lemma~\ref{lemm:closedform} which we restate below for the sake of exposition.
    \closedform*
    It is easy to notice that in the case where we have $\alpha = |\cF|/|\cR|$ $w_j^\da=0$ which concludes this case. For the case where $\alpha > |\cF|/|\cR|$
    we refer the reader to the proof of Lemma~\ref{lemm:logidivergence}, where we show that $w_j^\da\rightarrow -\infty$ for any value of $\lambda>0$ so the distance is infinite in this case.
\end{proof}

\section{Logistic Regression 2 dimensions}

In this section we will study the natural extension of the previous example, where we were studying the 1 dimensional case. In this case we assume that our samples are of the form:

$$s_1=(1,\epsilon),~~~s_2=(\epsilon,1)$$

This gives the following equations for the optimality conditions for training on the full data set $\cD$:

\begin{eqnarray*}
    w_1 &=& \frac{|\cR_{i,j}|}{\lambda|\cD|}(e^{-(w_1+w_2\epsilon)}+\alpha \epsilon e^{-(w_1\epsilon+w_2)})\\
    w_2 &=& \frac{|\cR_{i,j}|}{\lambda|\cD|}(\epsilon e^{-(w_1+w_2\epsilon)}+\alpha e^{-(w_1\epsilon+w_2)})
\end{eqnarray*}

For the retrain set $\cR$ we have that:

\begin{eqnarray*}
    w_1 &=& \frac{|\cR_{i,j}|}{\lambda|\cR|}e^{-(w_1+w_2\epsilon)}\\
    w_2 &=& \frac{|\cR_{i,j}|}{\lambda|\cR|}\epsilon e^{-(w_1+w_2\epsilon)}
\end{eqnarray*}

For the Descent Ascent unlearning we have that:

\begin{eqnarray*}
    w_1 &=& \frac{|\cR_{i,j}|}{\lambda|\cR|}e^{-(w_1+w_2\epsilon)}-\frac{|\cR_{i,j}|}{\lambda|\cF|}\alpha \epsilon e^{-(w_1\epsilon+w_2)}\\
    w_2 &=& \frac{|\cR_{i,j}|}{\lambda|\cR|}\epsilon e^{-(w_1+w_2\epsilon)}-\frac{|\cR_{i,j}|}{\lambda|\cF|}\alpha e^{-(w_1\epsilon+w_2)}
\end{eqnarray*}

We will now rewrite the above equations by setting $x= w_1+w_2\epsilon$ and $y = w_1\epsilon+w_2$, this simplifies the equations and still allows us to make our claim that DA can only harm the model if there is a total ordering over the values of the solutions of the rewritten equations.

For the dataset $\cD$ we have:

\begin{eqnarray*}
    \xd &=& \frac{|\cR_{i,j}|}{\lambda|\cD|}((1+\epsilon^2)e^{-\xd}+2\alpha \epsilon e^{-\yd})\\
    \yd &=& \frac{|\cR_{i,j}|}{\lambda|\cD|}(2\epsilon e^{-\xd}+\alpha(1+\epsilon^2) e^{-\yd})
\end{eqnarray*}

For the retrain set $\cR$, we have that:

\begin{eqnarray*}
    \xr &=& \frac{|\cR_{i,j}|}{\lambda|\cR|}(1+\epsilon^2)e^{-\xr}\\
    \yr &=& \frac{|\cR_{i,j}|}{\lambda|\cR|}2\epsilon e^{-\xr}
\end{eqnarray*}

For the DA method we get the following equations:

\begin{eqnarray*}
    \xda &=& \frac{|\cR_{i,j}|}{\lambda|\cR|}(1+\epsilon^2)e^{-\xda}-\frac{|\cR_{i,j}|}{\lambda |\cF|}\alpha 2\epsilon e^{-\yda}\\
    \yda &=& \frac{|\cR_{i,j}|}{\lambda|\cR|}2\epsilon e^{-\xda}-\frac{|\cR_{i,j}|}{\lambda|\cF|}\alpha (1+\epsilon^2)e^{-\yda}
\end{eqnarray*}

Before proceeding, let us point out that $\yda \leq \xda$, since $1+\epsilon^2 \geq 2\epsilon$, for the same reason, we get that $\yr \leq \xr$ and finally without loss of generality we will use that $\yd \leq \xd$. In \cref{lemm:existenceleq} we give a short proof regarding the existence of such solutions.

\begin{lemma}\label{lemm:existenceleq}
    For any $\alpha \leq 1$, we have that there exists a solution for the original dataset, such that $\yd \leq \xd$
\end{lemma}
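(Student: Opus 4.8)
The plan is to prove the statement in two moves: first guarantee that a stationary point $(\xd,\yd)$ exists, and then show that subtracting the two optimality conditions forces the ordering whenever $\alpha\le 1$. Throughout I work in the non-degenerate regime $\epsilon\neq 1$; when $\epsilon=1$ the two samples coincide, the change of variables $x=w_1+\epsilon w_2$, $y=\epsilon w_1+w_2$ is singular, and the claim holds trivially since then $\xd=\yd$.

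For existence, I would note that, restricted to the two coordinates $i,j$ (which decouple from the rest under \cref{ass:semi-orthogonal}), the objective $\gL_\cD$ is a sum of convex exponential-loss terms plus the strongly convex ridge penalty $\tfrac{\lambda}{2}\|w\|_2^2$. Hence it is strongly convex and coercive and admits a unique minimizer, whose first-order conditions are exactly the two displayed equations for $\cD$ written in the $(x,y)$ coordinates. Equivalently, the fixed-point map sending $(x,y)$ to the right-hand sides carries a box $[0,M_1]\times[0,M_2]$ into itself, using $e^{-x},e^{-y}\le 1$ on the positive quadrant, so Brouwer's theorem would also yield a fixed point. Either route produces a solution $(\xd,\yd)$.

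For the ordering, I would set $c:=|\rij|/(\lambda|\cD|)>0$ and subtract the $\yd$-equation from the $\xd$-equation. The cross terms collapse because $1+\epsilon^2-2\epsilon=(1-\epsilon)^2$, leaving
\[
\xd-\yd=c\,(1-\epsilon)^2\bigl(e^{-\xd}-\alpha\,e^{-\yd}\bigr).
\]
This identity is the crux of the argument. Suppose, for contradiction, that $\yd>\xd$; then the left-hand side is negative, and since $c(1-\epsilon)^2>0$ we obtain $e^{-\xd}<\alpha\,e^{-\yd}$. Invoking $\alpha\le 1$ gives $e^{-\xd}<e^{-\yd}$, i.e.\ $\xd>\yd$, contradicting the assumption. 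Hence every stationary point satisfies $\yd\le\xd$. The only genuinely delicate steps are supplying the existence statement rigorously, which strong convexity settles cleanly, and pinpointing where $\alpha\le 1$ actually enters --- namely the single inequality $\alpha e^{-\yd}\le e^{-\yd}$, which simultaneously explains why the conclusion can fail once $\alpha>1$ (the forget direction then dominates and can push $\yd$ above $\xd$).
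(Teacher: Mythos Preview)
Your argument is correct and is substantially simpler than the paper's. The paper proves the lemma by a dynamical construction: it starts from the symmetric solution at $\alpha=1$ (where $x_0=y_0$), then runs a nonlinear Gauss--Seidel iteration for the target $\alpha\le 1$ and shows inductively that the iterates satisfy $y_{k+1}<y_k$ and $x_{k+1}>x_k$, so the ordering $y_k\le x_k$ is preserved along the sequence, which converges to a stationary point. Your route bypasses all of this: strong convexity of the ridge-regularized exponential loss already gives a unique minimizer, and the single algebraic identity $\xd-\yd=c(1-\epsilon)^2\bigl(e^{-\xd}-\alpha e^{-\yd}\bigr)$, obtained by subtracting the two stationarity equations, forces $\yd\le\xd$ by the contradiction you wrote. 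Your argument in fact shows that \emph{every} stationary point satisfies the ordering (which, together with uniqueness, is equivalent to the paper's existence claim), and it also makes transparent exactly where $\alpha\le 1$ enters. The paper's iterative approach is more constructive and hints at how the solution deforms as $\alpha$ decreases from $1$, but for the bare statement of the lemma your subtraction argument is both shorter and sharper.
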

\begin{proof}
    For $\alpha=1$ we get that there exists a solution of the system such that $\yd \leq \xd$ by the symmetry of the system. For $\alpha \leq 1$. In order to demonstrate that there exists a solution for the system such that $\yd \leq \xd$ we will employ the nonlinear Gauss-Sidel method, which converges to a stationary point (minimum) for logistic regression. The proof goes as follows, we will initialize our algorithm in the solution for $\alpha = 1$ let it be $x_0,y_0$ and we know it holds that $x_0 \geq y_0$. We will follow the following update: (nonlinear Gauss-Sidel method starting from $y$)
    \begin{eqnarray*}
        y_{k+1} &\leftarrow& 2b\epsilon e^{-x_k} + \rW\left(b\alpha(1+\epsilon^2)e^{-2b\epsilon e^{-x_k}}\right)\\
        x_{k+1} &\leftarrow& 2b\alpha\epsilon e^{-y_k} + \rW\left(b(1+\epsilon^2)e^{-2b\alpha\epsilon e^{-y_k}}\right)
    \end{eqnarray*}
    For $y_1$ we have that:
    \begin{eqnarray*}
        y_{1} &=& 2b\epsilon e^{-x_0} + \rW\left(b\alpha(1+\epsilon^2)e^{-2b\epsilon e^{-x_0}}\right)\\
        &=& 2b\epsilon e^{-x_0} + \rW\left(b\alpha(1+\epsilon^2)e^{-2b\epsilon e^{-x_0}}\right) -\rW\left(b(1+\epsilon^2)e^{-2b\epsilon e^{-x_0}}\right)+\rW\left(b(1+\epsilon^2)e^{-2b\epsilon e^{-x_0}}\right)\\
        &=& y_0+ \rW\left(b\alpha(1+\epsilon^2)e^{-2b\epsilon e^{-x_0}}\right) -\rW\left(b(1+\epsilon^2)e^{-2b\epsilon e^{-x_0}}\right)
    \end{eqnarray*}
    and since $\rW$ is increasing we have that $\rW\left(b\alpha(1+\epsilon^2)e^{-2b\epsilon e^{-x_0}}\right) -\rW\left(b(1+\epsilon^2)e^{-2b\epsilon e^{-x_0}}\right) < 0$ implying that $y_1 < y_0$. Now let us define the function $f(x) = x + W\left(ce^{-x}\right)$ the function is increasing on $x$ therefore since $y_1<y_0$ we get that: $x_1 = f(2b\alpha\epsilon e^{-y_1})>f(2b\alpha\epsilon e^{-y_0}) = x_0$. Let us proceed with an induction step, we assume that we have $x_k > x_{k-1}$ and $y_k < y_{k-1}$ for $k\geq 1$. We will show that $y_{k+1} < y_k$ which directly implies that $x_{k+1} = f(2b\alpha\epsilon e^{-y_{k+1}})> f(2b\alpha\epsilon e^{-y_k})= x_k$ completing the inductive step.
    \begin{eqnarray*}
        y_{k+1} &=& 2b\epsilon e^{-x_k} + \rW\left(b\alpha(1+\epsilon^2)e^{-2b\epsilon e^{-x_k}}\right)\\
        &=& f(2b\epsilon e^{-x_k}) < f(2b\epsilon e^{-x_{k-1}})\\
        &=&y_k
    \end{eqnarray*}
    This concludes the inductive step and we therefore have that for all $k$ $y_k \leq x_k$ for any $\alpha$, as a result, since the method converges to the solution of the system there exists a solution which satisfies $\yd \leq \xd$. In the proof above we have that $b = ||\cR_{i,j}|/\lambda|\cD|$
\end{proof}

\subsection{Characterization of the solutions of the $2d$ Logistic regression}\label{app:2dsolcharacterization}

We start this section by giving an exact solution for the coordinates of the retrain problem.

\closedretrain*
\begin{proof}
    We have that:
    \begin{eqnarray*}
    \xr &=& \frac{|\cR_{i,j}|}{\lambda|\cR|}(1+\epsilon^2)e^{-\xr}
    \to \xr = \rW\left(\frac{(1+\epsilon^2)|\cR_{i,j}|}{\lambda|\cR|}\right)
    \end{eqnarray*}
    So:
    \begin{eqnarray*}
    \yr &=& \frac{|\cR_{i,j}|}{\lambda|\cR|}2\epsilon e^{-\xr}\\
    &=& \frac{2\epsilon}{1+\epsilon^2}\frac{(1+\epsilon^2)|\cR_{i,j}|}{\lambda|\cR|}e^{-\rW\left(\frac{(1+\epsilon^2)|\cR_{i,j}|}{\lambda|\cR|}\right)}\\
    &=& \frac{2\epsilon}{1+\epsilon^2}\rW\left(\frac{(1+\epsilon^2)|\cR_{i,j}|}{\lambda|\cR|}\right)
    \end{eqnarray*}
    This concludes the proof. In the last equality we used the property of the Lambert function.
\end{proof}

For the other two problems it is not possible to provide exact solutions, as we did in the retrain one unfortunately, so we will provide upper and lower bounds for their values.

\originalstationary*
\begin{proof}
    We have that 
    \begin{eqnarray*}
    \xd &=& \frac{1}{\lambda|\cD|}((1+\epsilon^2)e^{-\xd}+2\alpha \epsilon e^{-\yd})\\
    \yd &=& \frac{1}{\lambda|\cD|}(2\epsilon e^{-\xd}+\alpha(1+\epsilon^2) e^{-\yd})
    \end{eqnarray*}
As we discuss above we have that $\yd \leq \xd \Rightarrow e^{-\yd} \geq e^{-\xd}$, which implies that:
    \begin{eqnarray*}
    \xd &\geq& \frac{|\cR_{i,j}|}{\lambda|\cD|}((1+\epsilon^2)e^{-\xd}+2\alpha \epsilon e^{-\xd}) = \frac{|\cR_{i,j}|}{\lambda|\cD|}((1+\epsilon^2)+2\alpha\epsilon)e^{-\xd}\\
    \yd &\leq& \frac{|\cR_{i,j}|}{\lambda|\cD|}(2\epsilon e^{-\yd}+\alpha(1+\epsilon^2) e^{-\yd}) = \frac{|\cR_{i,j}|}{\lambda|\cD|}(2\epsilon +\alpha(1+\epsilon)^2)e^{-\yd}
    \end{eqnarray*}
    So from the inequalities above, we get that:
    \begin{eqnarray*}
        \xd &\geq& \rW\left(\frac{|\cR_{i,j}|}{\lambda|\cD|}((1+\epsilon^2)+2\alpha\epsilon)\right)\\
        \yd &\leq& \rW\left(\frac{|\cR_{i,j}|}{\lambda|\cD|}(2\epsilon +\alpha(1+\epsilon^2))\right)
    \end{eqnarray*}
    Now we have an upper bound for $\yd$ and a lower bound for $\xd$. In order to provide a lower bound for $\yd$ and an upper bound for $\xd$. We should notice that $2\epsilon e^{-\xd}\geq 0$, which gives:
    \begin{eqnarray*}
        \yd &\geq& \frac{|\cR_{i,j}|}{\lambda |\cD|}\alpha(1+\epsilon^2)e^{-\yd} \Rightarrow\\
        \yd &\geq& \rW\left(\frac{\alpha(1+\epsilon^2)|\cR_{i,j}|}{\lambda|\cD|}\right)
    \end{eqnarray*}
    This completes the bounds for $\yd$, now in order to compute the upper bound for $\xd$, we have that:
    \begin{eqnarray*}
        e^{-\yd} &\leq& e^{-\rW\left(\alpha(1+\epsilon^2)|\cR_{i,j}|/(\lambda|\cD|)\right)} \Rightarrow\\
        e^{-\yd} &\leq& \frac{\lambda |\cD|}{\alpha(1+\epsilon^2)|\cR_{i,j}|}\frac{\alpha(1+\epsilon^2)|\cR_{i,j}|}{\lambda |\cD|}e^{-\rW\left(\alpha(1+\epsilon^2)|\cR_{i,j}|/(\lambda|\cD|)\right)}\Rightarrow\\
        e^{-\yd} &\leq& \frac{\lambda |\cD|}{\alpha(1+\epsilon^2)|\cR_{i,j}|} W\left(\frac{\alpha(1+\epsilon^2)|\cR_{i,j}|}{\lambda|\cD|}\right)
    \end{eqnarray*}
    So we have that:
    \begin{eqnarray*}
        \xd &\leq& \frac{|\cR_{i,j}|}{\lambda|\cD|}((1+\epsilon^2)e^{-\xd}+2\alpha \epsilon \frac{\lambda |\cD|}{\alpha(1+\epsilon^2)|\cR_{i,j}|} W\left(\frac{\alpha(1+\epsilon^2)}{\lambda|\cD|}\right)) \Rightarrow\\
        \xd &\leq& \frac{(1+\epsilon^2)|\cR_{i,j}|}{\lambda|\cD|}e^{-\xd}+ \frac{2\epsilon}{1+\epsilon^2} W\left(\frac{\alpha(1+\epsilon^2)|\cR_{i,j}|}{\lambda|\cD|}\right)) \Rightarrow \\
        \xd &\leq& \frac{2\epsilon}{1+\epsilon^2} W\left(\frac{\alpha(1+\epsilon^2)|\cR_{i,j}|}{\lambda|\cD|}\right)+\rW\left(\frac{(1+\epsilon^2)|\cR_{i,j}|}{\lambda|\cD|}e^{-\frac{2\epsilon}{1+\epsilon^2} W\left(\frac{\alpha(1+\epsilon^2)|\cR_{i,j}|}{\lambda|\cD|}\right)}\right) \Rightarrow\\
        \xd &\leq& \frac{2\epsilon}{1+\epsilon^2} W\left(\frac{\alpha(1+\epsilon^2)|\cR_{i,j}|}{\lambda|\cD|}\right)+\rW\left(\frac{(1+\epsilon^2)|\cR_{i,j}|}{\lambda|\cD|}\right)
    \end{eqnarray*}
    where the third inequality comes from the solution of the Lambert equation for the RHS of the inequality and the last one comes from the fact that the exponenent is non positive. This completes the proof.
\end{proof}

\dastationary*
\begin{proof}
    As stated earlier we have that $\yda \leq \xda \Rightarrow e^{-\yda} \geq e^{-\xda}$ and
    \begin{eqnarray*}
    \xda &=& \frac{|\cR_{i,j}|}{\lambda|\cR|}(1+\epsilon^2)e^{-\xda}-\frac{|\cR_{i,j}|}{\lambda |\cF|}\alpha 2\epsilon e^{-\yda} \\
    \yda &=& \frac{|\cR_{i,j}|}{\lambda|\cR|}2\epsilon e^{-\xda}-\frac{|\cR_{i,j}|}{\lambda|\cF|}\alpha (1+\epsilon^2)e^{-\yda}
    \end{eqnarray*}
    So:
    \begin{eqnarray*}
    \xda &\leq& \left(\frac{|\cR_{i,j}|}{\lambda|\cR|}(1+\epsilon^2)-\frac{|\cR_{i,j}|}{\lambda |\cF|}\alpha 2\epsilon \right)e^{-\xda} \\
    \yda &\leq& \left(\frac{|\cR_{i,j}|}{\lambda|\cR|}2\epsilon -\frac{|\cR_{i,j}|}{\lambda|\cF|}\alpha (1+\epsilon^2)\right)e^{-\yda}
    \end{eqnarray*}
    So we get that:
    \begin{eqnarray*}
    \xda &\leq& \rW\left(\frac{|\cR_{i,j}|}{\lambda|\cR|}(1+\epsilon^2)-\frac{|\cR_{i,j}|}{\lambda |\cF|}\alpha 2\epsilon \right) \\
    \yda &\leq& \rW\left(\frac{|\cR_{i,j}|}{\lambda|\cR|}2\epsilon -\frac{|\cR_{i,j}|}{\lambda|\cF|}\alpha (1+\epsilon^2)\right)
    \end{eqnarray*}
    which concludes the proof.
\end{proof}

\subsection{Derivation of the relevant size of the forget set}\label{app:sizederivation}

\dbiggerda*
\begin{proof}
    We will start from \cref{lemm:originalstationary} and \cref{lemm:dastationary}, which we restate both below for the sake of exposition.
    \originalstationary*
    \dastationary*
    We will require that the lower bounds provided for $\xd,\yd$ are bigger than the upper bounds provided for $\xda,\yda$, since the Lambert function $\rW$ is monotone, we can just solve both inequalities for $\alpha$, $\xd \geq \xda$ and $\yd \geq \yda$ and this concludes the proof.
\end{proof}
Finally we need to find the range of $\alpha$ for which it holds that $\xr \geq \xd$ and $\yr \geq \yd$, which is given in ~\cref{lemm:rbiggerd}, which we restate next for the sake of exposition.
\rbiggerd*
\begin{proof}
We will use ~\cref{lemm:originalstationary} and ~\cref{lemm:closedretrain2d}. Again similar to ~\cref{lemm:dbiggerda} we can solve for $\alpha$ and we get the expressions that solve the $x^\cR>x^\cD, y^\cR>y^\cD$ equations.
Solving $x^\cR=x^\cD$
\begin{align}
    \alpha^{\cR>\cD}_x
    =
    \frac{D \lambda  \left(W\left(\frac{\epsilon ^2+1}{\lambda  R}\right)-W\left(\frac{\epsilon ^2+1}{D \lambda }\right)\right) \exp \left(\frac{\left(\epsilon ^2+1\right) \left(W\left(\frac{\epsilon ^2+1}{\lambda  R}\right)-W\left(\frac{\epsilon ^2+1}{D \lambda }\right)\right)}{2 \epsilon }\right)}{2 \epsilon },
\end{align}
where for any $\alpha<\alpha_x^{\cR>\cD}$ there is a range of $\epsilon$ for which $x^\cR>x^\cD$.

Similarly, solving $y^\cR=y^\cD$
\begin{align}
    \alpha^{\cR>\cD}_y
    =
    \frac{2 \epsilon  \left(D \lambda  e^{\frac{2 \epsilon  W\left(\frac{\epsilon ^2+1}{\lambda  R}\right)}{\epsilon ^2+1}} W\left(\frac{\epsilon ^2+1}{\lambda  R}\right)-\epsilon ^2-1\right)}{\left(\epsilon ^2+1\right)^2},
\end{align}
where for any $\alpha<\alpha_y^{\cR>\cD}$ there is a range of $\epsilon$ for which $y^\cR>y^\cD$.

The solution is therefore $\alpha\leq \min{\left[ \alpha_x^{\cR>\cD},\alpha_y^{\cR>\cD}\right]}$.
\end{proof}

\section{Logistic Regression in 2D intuition}

Let us consider nearly orthogonal data, such that all coordinates apart from two are orthogonal to each other. Namely, we choose the first two samples to be $x_{1}=(1,\epsilon,0,\ldots,0)$ and $x_{2}=(\epsilon,1,0,\ldots,0)$, while the remaining $d-2$ points are orthogonal such that $x_{a}=e_a$ for $a=3,\ldots,d$, where $e_a$ are the unit vectors.
We further assume that the two correlated samples $x_1,x_2$ share the same label $y_1=y_2=1$. In this case, the unlearning problem decouples the first 2 dimensions from the rest, leaving a coupled set of equations for the weights along the first two directions $w_1,w_2$ for the original classification problem
\begin{align}
    w_1 &=
    \frac{1}{\lambda|\cD|}(   e^{-( w_1 + w_2 \epsilon) } 
    +  \epsilon e^{-(w_1 \epsilon  +  w_2 )})
    , \quad
        w_2 =
    \frac{1}{\lambda|\cD|}(   \epsilon e^{-( w_1 + w_2 \epsilon) } 
    +   e^{-(w_1 \epsilon  +  w_2 )}),
    \label{eq:closed_GDA_2d}
\end{align}
which can be solved in the limit of $\epsilon \to 1^-$, as 
\begin{align}
    w_1 = w_2 = \frac{1}{2} W\left(\frac{2 (\epsilon +1)}{\lambda |\cD| }\right).
\end{align}
The retrain problem has the minimum at
\begin{align}
    w_1 &=
    \frac{1}{\lambda|\cR|}   e^{-( w_1 + w_2 \epsilon) } 
    , \quad
        w_2 =
    \frac{1}{\lambda|\cR|}
    \epsilon e^{-( w_1 + w_2 \epsilon) } ,
    \label{eq:closed_GDA_2d_retrain}
\end{align}
and the DA is given by
\begin{align}
    w_1 &=
    \frac{1}{\lambda|\cR|}(   e^{-( w_1 + w_2 \epsilon) } 
    -  \epsilon e^{-(w_1 \epsilon  +  w_2 )})
    , \quad
        w_2 =
    \frac{1}{\lambda|\cR|}(   \epsilon e^{-( w_1 + w_2 \epsilon) } 
    -   e^{-(w_1 \epsilon  +  w_2 )}).
    \label{eq:closed_GDA_2d_DA}
\end{align}
Our goal is to study how far is the solution given by GDA from the one given by retraining. The retrained solution can be found analytically to be
\begin{align}
    w_1 = 
    \frac{W\left(\frac{\epsilon ^2+1}{|\cR| \lambda }\right)}{\epsilon ^2+1},
    \quad
    w_2 = 
    \frac{\epsilon W\left(\frac{\epsilon ^2+1}{|\cR| \lambda }\right)}{\epsilon ^2+1}.
\end{align}
The GDA equations do not obtain a closed form solution, but they can be solved when assuming $\epsilon \to 1^-$, such that 
\begin{align}
     w_1=\frac{e^{-w_1-w_2} \left(w_1-w_2-1\right) (\epsilon -1)}{\lambda  |\cR| },
     \quad
        w_2=\frac{e^{-w_1-w_2} \left(w_1-w_2+1\right) (\epsilon -1)}{\lambda |\cR|}
\end{align}
which are solved as
\begin{align}
    w_1
    &=
    \frac{1}{4} \left(W\left(-\frac{8 (\epsilon -1)^2}{|\cR|^2 \lambda ^2}\right)-i \sqrt{2} \sqrt{W\left(-\frac{8 (\epsilon -1)^2}{|\cR|^2 \lambda ^2}\right)}\right),
    \\ \nonumber
    w_2 
    &=
    \frac{1}{4} \left(W\left(-\frac{8 (\epsilon -1)^2}{|\cR|^2 \lambda ^2}\right)+i \sqrt{2} \sqrt{W\left(-\frac{8 (\epsilon -1)^2}{|\cR|^2 \lambda ^2}\right)}\right).
\end{align}
It is sufficiently interesting to consider the sum of $w_1+w_2$ compared to the retrained solution, and define the difference
\begin{align}
    \Delta &= 
    w_1^{\mathrm {DA}}
    +w_2^{\mathrm {DA}}
    -
    (w_1^{\mathrm {Re}}+
    w_2^{\mathrm {Re}})
    =
    \frac{1}{2}
    W\left(-\frac{8 (\epsilon -1)^2}{|\cR|^2 \lambda ^2}\right)
    -
     \frac{
     (1+\epsilon)
     W\left(\frac{\epsilon ^2+1}{|\cR| \lambda }\right)}{\epsilon ^2+1}
     \\ \nonumber
     &\underset{\epsilon\to1^-}{=}
     -
     W\left(\frac{2}{|\cR| \lambda }\right)
\end{align}

\section{Experimental details}
\label{app:experiment_details}
\paragraph{Hyperparameters}
Our implementation is based on \citet{rinberg2025dataunlearnbench} which follows the methodology in \citet{georgiev2024attributetodeletemachineunlearningdatamodel}. We pretrain \RESNETNINE{} for 24 epochs using stochastic gradient descent (SGD) with an initial learning rate of $0.4$, following a cyclic schedule that peaks at epoch~5. We employ a batch size of $512$, momentum of $0.9$, and a weight‐decay coefficient of $5\times10^{-4}$.

We also adopt nine forget sets directly from \citet{georgiev2024attributetodeletemachineunlearningdatamodel}, which comprise both random subsets and semantically coherent subpopulations identified via principal‐component analysis of the datamodel influence matrix. To construct them, an $n\times n$ datamodel matrix is formed by concatenating “train×train” datamodels (with $n=50\,000$) by computing its top principal components (PCs) then we can define:
\begin{enumerate}
  \item \textbf{Forget set 1}: 10 random samples.
  \item \textbf{Forget set 2}: 100 random samples.
  \item \textbf{Forget set 3}: 500 random samples.
  \item \textbf{Forget set 4}: 10 samples with the highest projection onto the 1st PC.
  \item \textbf{Forget set 5}: 100 samples with the highest projection onto the 1st PC.
  \item \textbf{Forget set 6}: 250 samples with the highest and 250 samples with the lowest projection onto the 1st PC.
  \item \textbf{Forget set 7}: 10 samples with the highest projection onto the 2nd PC.
  \item \textbf{Forget set 8}: 100 samples with the highest projection onto the 2nd PC.
  \item \textbf{Forget set 9}: 250 samples with the highest and 250 samples with the lowest projection onto the 2nd PC.
\end{enumerate}

Most unlearning algorithms are highly sensitive to the choice of forget set and hyperparameters. Therefore we perform an extensive hyperparameter exploration, evaluating each baseline unlearning algorithm on each forget set. Our setting is again similar to \citet{georgiev2024attributetodeletemachineunlearningdatamodel} but we consider a slightly larger hyperparameter grid for the employed methods and report results for all configurations rather than only the best-performing runs. More specifically, we evaluate over the Cartesian product of the following hyperparameter grids:

\begin{itemize}
    \item \textbf{\GA{}}: Optimized with SGD. Learning rates: $\{1\times10^{-5}, 5\times10^{-5}, 1\times10^{-4}, 5\times10^{-4}, 1\times10^{-3}, 1\times10^{-2}, 5\times10^{-2}\}$; epochs: $\{1, 3, 5, 7, 10\}$.
    
    \item \textbf{\GAD{}}: Optimized with SGD. Learning rates: $\{5\times10^{-5}, 5\times10^{-4}, 1\times10^{-3}, 5\times10^{-3}\}$; total epochs: $\{5, 7, 10\}$; ascent epochs: $\{3, 5\}$; forget batch size: $\{32, 64\}$.
    
    \item \textbf{\SCRUB{}}: Optimized with SGD. Learning rates: $\{5\times10^{-5}, 5\times10^{-4}, 1\times10^{-3}, 5\times10^{-3}\}$; total epochs: $\{5, 7, 10\}$; ascent epochs: $\{3, 5\}$; forget batch size: $\{32, 64\}$.
\end{itemize}

We use a fixed batch size of $64$ and train $100$ models per configuration. For each run, we measure performance using the 95-th percentile of \KLOM{} scores.

\paragraph{Statistical Significance}
Using $N=100$ models to compute \KLOM{} is computationally expensive although such expense comes at the gain of having low variance and results closely reproducing \citet{georgiev2024attributetodeletemachineunlearningdatamodel}. We find using lower values such as $N=20$, $N=50$ to produce large differences between margin distributions of pretrained and oracle models on the retain and validation sets (where \KLOM{} should be low). More specifically, margin distributions become stable for all sets after $N=80$. Reporting the 95-th percentile of \KLOM{} scores follows the methodology established on \citet{georgiev2024attributetodeletemachineunlearningdatamodel}. Furthermore, reporting all runs instead of just the best one for each compute cost is more statistically transparent.

\paragraph{Compute resources}
All experiments were conducted on a server equipped with eight NVIDIA A100-SXM4 GPUs, each with 80 GB of GPU memory. A single unlearning configuration run was never split across different GPUs, many configurations were executed in parallel.

\section{Additional Experiments}
\label{app:additional-experiments}

We provide additional analysis of the \KLOM{} scores across various unlearning methods and forget sets. \FIG{}~\ref{fig:klom_scatter_all} presents the \KLOM{} scores of \GA{}, \GAD{}, and \SCRUB{}. We observe that increasing the size of the forget set or including high-influence points significantly reduces the likelihood of achieving successful unlearning. \FIG{}~\ref{fig:klom_scatter_all_retain} shows analogous results, but with \KLOM{} scores computed over the retain set instead of the validation set. The patterns are nearly identical to those in \FIG{}~\ref{fig:klom_scatter_all}. A pretrained model typically exhibits low \KLOM{} scores on both validation and retain sets, with very similar magnitudes.

\begin{figure*}[t!]
    \centering
\includegraphics[width=.95\linewidth]{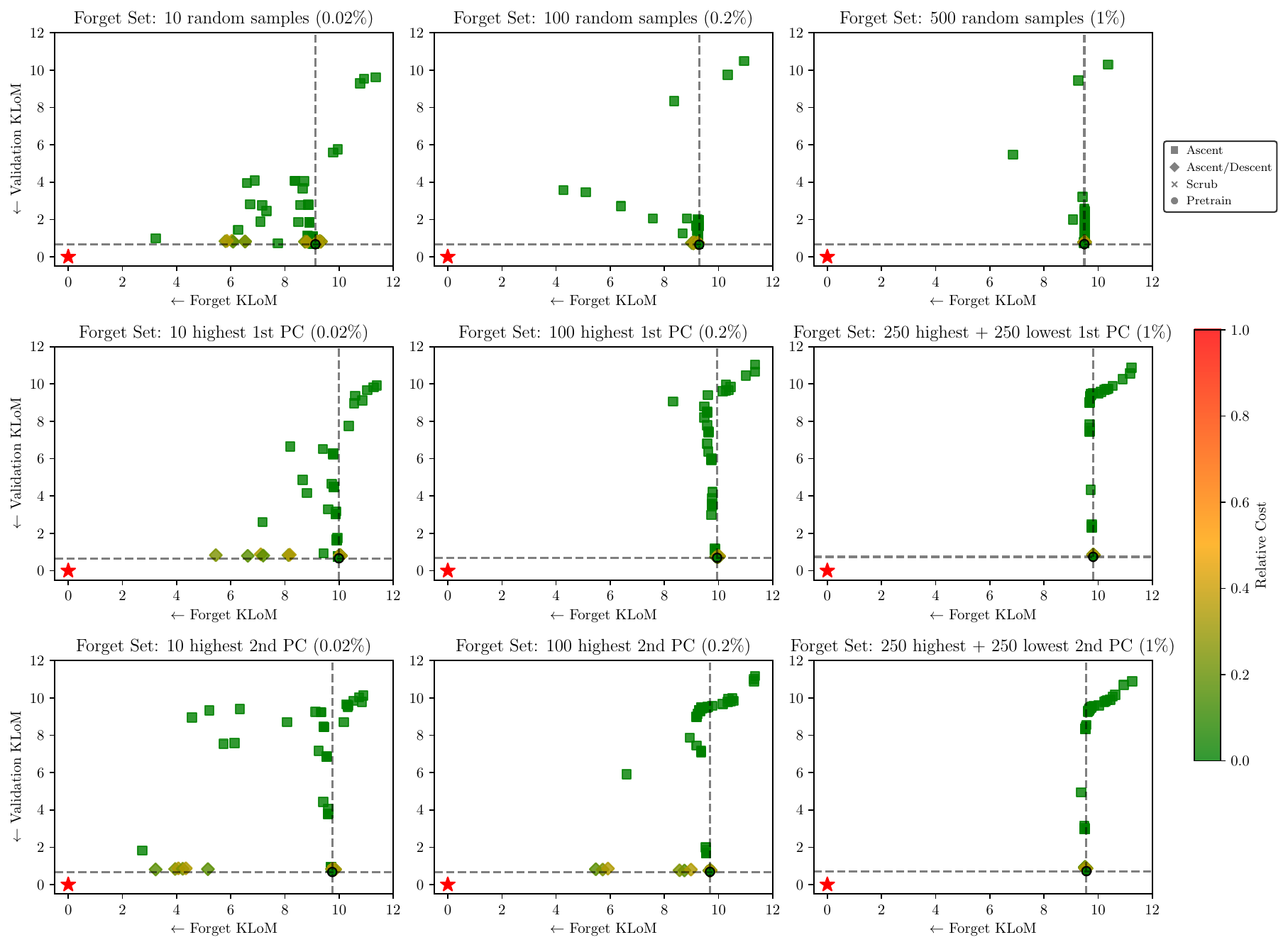} 
    \caption{
    We present the \KLOM{} scores of \GA{}, \GAD{} and \SCRUB{} when unlearning over each one of the forget sets (axes and points follow \FIG{} \ref{fig:ascent_fails_text}). We find an increase in forget set size and containing high influence points to strongly decrease the likelihood of any run achieving successful unlearning. For \SCRUB{} we observe that runs remain close to the pretrained model in terms of \KLOM{} scores under our experimental setup.
    }
    \label{fig:klom_scatter_all}
    \vspace{-10pt}
\end{figure*}

\begin{figure*}[t!]
    \centering
\includegraphics[width=.95\linewidth]{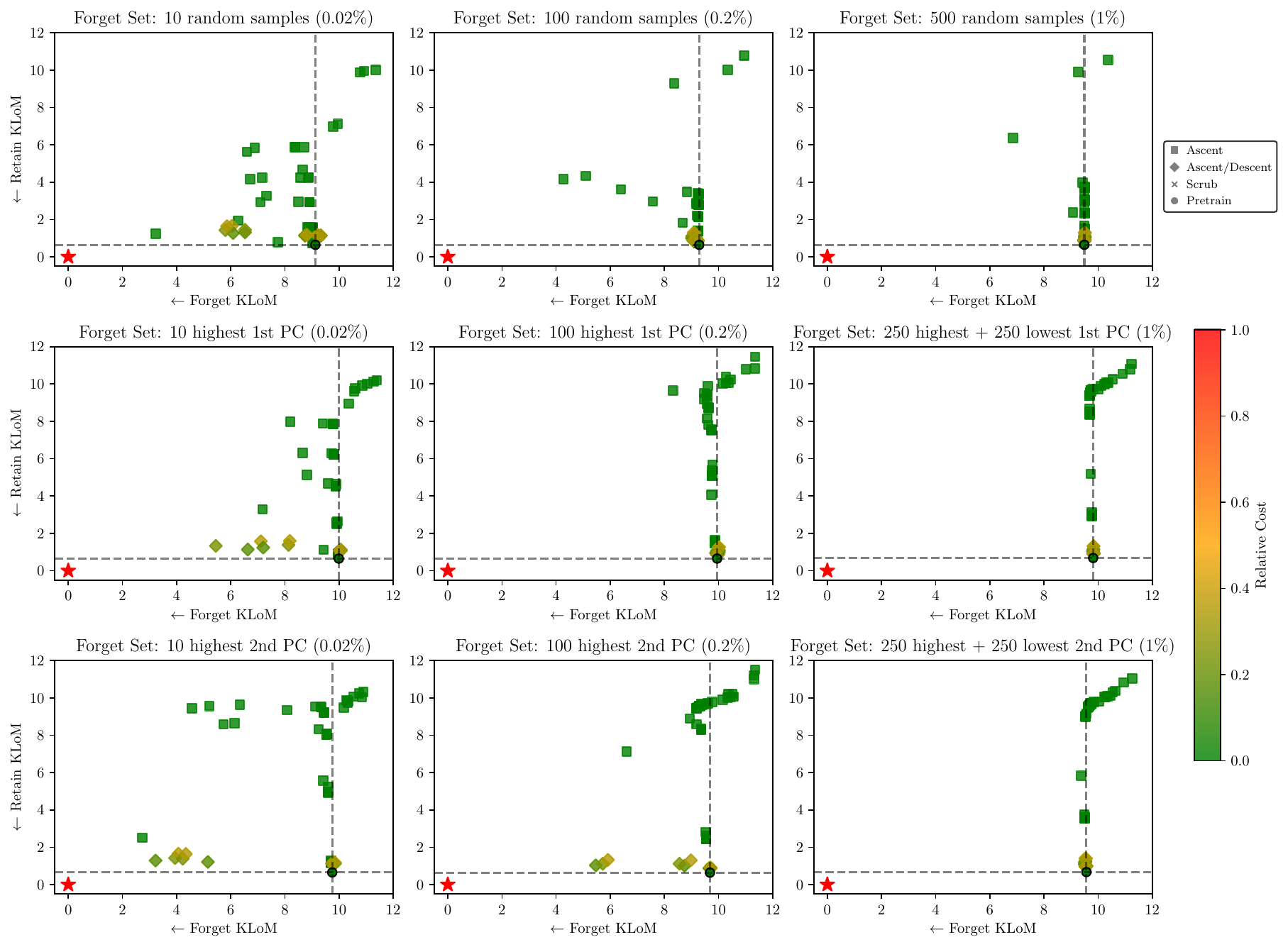} 
    \caption{
    We present the \KLOM{} scores of \GA{}, \GAD{} and \SCRUB{} when unlearning over each forget set. x-axis and points follow \FIG{} \ref{fig:ascent_fails_text} and y-axis now displays the \KLOM{} score in the retain set instead of the validation set. We observe very little difference when comparing with the results in \FIG{} \ref{fig:klom_scatter_all}. A pretrained model has low \KLOM{} scores on both the validation and retain sets with very similar magnitudes. These findings are consistent with \citet{georgiev2024attributetodeletemachineunlearningdatamodel}.
    }
    \label{fig:klom_scatter_all_retain}
    \vspace{-10pt}
\end{figure*}

\section{Broader Societal Impact}

Machine unlearning is crucial for privacy applications, namely, protecting sensitive data and complying with GDPR's 'right to be forgotten'. Our work, although mainly theoretical, demonstrates that descent-ascent methods often fail due to unacknowledged statistical dependencies between forget and retain sets. This finding has a critical consequence for privacy: to improve ascent based methods, practitioners 
are required to probe the retain set to understand its correlations with the forget set. This re-assessment of potentially sensitive data in the retain set during an unlearning task creates a privacy paradox. Therefore, for applications strictly governed by privacy, alternative unlearning strategies that do not require such re-examination of retained data appear preferable.

\end{document}